\pdfoutput=1
\documentclass[twoside]{article}

%
\usepackage[accepted]{aistats2019}

%


\usepackage[numbers]{natbib}

\PassOptionsToPackage{square,sort,comma,numbers}{natbib}

\usepackage[colorlinks=true,allcolors=black!40!blue]{hyperref}
\usepackage{mycustom}

\begin{document}

%
\runningtitle{Buffered Stochastic Variational Inference}

%
\runningauthor{Rui Shu, Hung H. Bui, Jay Whang, Stefano Ermon}

\twocolumn[

\aistatstitle{Training Variational Autoencoders with\\
Buffered Stochastic Variational Inference}

\aistatsauthor{
Rui Shu\\
Stanford University
\And
Hung H. Bui\\
VinAI
\And
Jay Whang\\
Stanford University
\And 
Stefano Ermon\\
Stanford University}
\aistatsaddress{} 
]


\begin{abstract}

The recognition network in deep latent variable models such as variational autoencoders (VAEs) relies on amortized inference for efficient posterior approximation that can scale up to large datasets. However, this technique has also been demonstrated to select suboptimal variational parameters, often resulting in considerable additional error called the {\em amortization gap}. 
To close the amortization gap and improve the training of the generative model, recent works have introduced an additional refinement step that applies stochastic variational inference (SVI) to improve upon the variational parameters returned by the amortized inference model. In this paper, we propose the \emph{Buffered Stochastic Variational Inference} (BSVI), a new refinement procedure that makes use of SVI's sequence
of intermediate variational proposal distributions and their corresponding importance weights to construct a new generalized importance-weighted lower bound.
We demonstrate empirically that training the variational autoencoders with BSVI consistently out-performs SVI, yielding an improved training procedure for VAEs.
\end{abstract}

\section{Introduction} 

Deep generative latent-variable models are important building blocks in current approaches to a host of challenging high-dimensional problems including density estimation \cite{kingma2013auto,sonderby2016ladder,shu2016bottleneck}, semi-supervised learning \cite{kingma2014semi,kuleshov2017deep} and 
representation learning for downstream tasks \cite{chen2018isolating,watter2015embed,banijamali2017robust,li2017infogail}. To train these models, the principle of maximum likelihood is often employed. However, maximum likelihood is often intractable due to the difficulty of marginalizing the latent variables. Variational Bayes addresses this by instead providing a tractable lower bound of the log-likelihood, which serves as a surrogate target for maximization. Variational Bayes, however, introduces a \emph{per sample} optimization subroutine to find the variational proposal distribution that best matches the true posterior distribution (of the latent variable given an input observation). To amortize the cost of this optimization subroutine, the variational autoencoder introduces an amortized inference model that learns to predict the best proposal distribution given an input observation \cite{kingma2013auto,rezende2014stochastic,gershman2014amortized,zhao2018lagrangian}.


Although the computational efficiency of amortized inference has enabled latent variable models to be trained at scale on large datasets \cite{pu2016variational,gulrajani2016pixelvae}, amortization introduces an additional source of error in the approximation of the posterior distributions if the amortized inference model fails to predict the optimal proposal distribution. This additional source of error, referred to as the amortization gap \cite{cremer2018inference}, causes variational autoencoder training to further deviate from maximum likelihood training \cite{cremer2018inference,shu2018amortized}.

To improve
training, numerous methods have been developed to reduce the amortization gap. In this paper, we focus on a class of methods \cite{krishnan2017challenges,kim2018semi,marino2018iterative} that takes an initial proposal distribution predicted by the amortized inference model and refines this initial distribution with the application of \emph{Stochastic Variational Inference} (SVI) \cite{hoffman2013stochastic}. Since SVI applies gradient ascent to iteratively update the proposal distribution, a by-product of this procedure is a trajectory of proposal distributions $(q_0, \ldots, q_k)$ and their corresponding importance weights $(w_0, \ldots w_k)$. The intermediate distributions are discarded, and only the last distribution $q_k$ is retained for updating the generative model. Our key insight is that the intermediate importance weights can be \emph{repurposed} to further improve training. Our contributions are as follows
\begin{enumerate}
    \item We propose a new method, \emph{Buffered Stochastic Variational Inference} (BSVI), that takes advantage of the intermediate importance weights and constructs a new lower bound (the BSVI bound).
    \item We show that the BSVI bound is a special instance of a family of generalized importance-weighted lower bounds.
    \item We show that training variational autoencoders with BSVI consistently outperforms SVI, demonstrating the effectiveness of leveraging the intermediate weights.
\end{enumerate}
Our paper shows that BSVI is an attractive replacement of SVI with minimal development and computational overhead. 

\section{Background and Notation}

We consider a latent-variable generative model $p_\theta(x, z)$  where $x \in \X$ is observed, $z \in \Z$ is latent, and $\theta$ are the model's parameters. The marginal likelihood $p_\theta(x)$ is intractable but can be lower bounded by the evidence lower bound (ELBO)
\begin{align}
\ln p_\theta(x) \ge \Expect_{q(z)} \brac{\ln \frac{p_\theta(x, z)}{q(z)}} = \Expect_{q(z)} \ln w(z), \label{eq:elbo}
\end{align}
which holds for any distribution $q(z)$. Since the gap of this bound is exactly the Kullback-Leibler divergence $\KL{q(z)}{p_\theta(z \giv x)}$, $q(z)$ is thus the variational approximation of the posterior.
Furthermore, by viewing $q$ as a proposal distribution in an importance sampler, we refer to $w(z)=\frac{p_\theta(x, z)}{q(z)}$ as an unnormalized importance weight. Since $w(z)$ is a random variable, the variance can be reduced by averaging the importance weights derived from i.i.d samples from $q(z)$. This yields the Importance-Weighted Autonenocder (IWAE) bound~\cite{burda2015importance},
\begin{align}
\ln p_\theta(x) \ge \Expect_{z_1\ldots z_k \stackrel{\text{i.i.d.}}{\sim} q} \brac{\ln \frac{1}{k} \sum_{i=1}^k w(z_i)} \ge \ELBO, \label{eq:iwae}
\end{align}
which admits a tighter lower bound than the ELBO~\cite{burda2015importance,domke2018importance}.

\subsection{Stochastic Variational Inference}

The generative model can be trained by jointly optimizing $q$ and $\theta$ to maximize the lower bound over the data distribution $\hat{p}(x)$. Supposing the variational family $\Q = \set{q(z \scolon \lambda)}_{\lambda \in \Lambda}$ is parametric and indexed by the parameter space $\Lambda$ (e.g. a Gaussian variational family indexed by mean and covariance parameters),
the optimization problem becomes
\begin{align}\label{eq:svi_intro}
    \max_{\theta} \Expect_{\hat{p}(x)} \brac{ \max_{\lambda} \Expect_{q(z \scolon \lambda)} \ln w(z \scolon \lambda, \theta) }.
\end{align}
where importance weight $w$ is now
\begin{align}
    w(z \scolon \lambda, \theta) = \frac{p_\theta(x, z)}{q(z \scolon \lambda)}.
\end{align}
For notational simplicity, we omit the dependency on $x$. For a fixed choice of $\theta$ and $x$, \cite{krishnan2017challenges} proposed to optimize $\lambda$ via gradient ascent, where one initializes with $\lambda_0$ and takes successive steps of
\begin{align}
    \lambda_{i + 1} \gets \lambda_i + \eta \nabla_{\lambda_i} \ELBO,
\end{align}
for which the ELBO gradient with respect to $\lambda_i$ can be approximated via Monte Carlo sampling as
\begin{align}\label{eq:svi_grad}
    \nabla_{\lambda_i} \ELBO \approx \frac{1}{m} \sum_{j=1}^m \nabla_{\lambda_i} \ln w(z_{\lambda_i}(\eps_i^{(j)}) \scolon \lambda_i, \theta)
\end{align}
where $z_i^{(j)} = z_{\lambda_i}(\eps_i^{(j)}) \sim q(z \scolon \lambda_i)$ is reparameterized as a function of $\lambda_i$ and a base distribution $p_0(\epsilon)$. We note that $k$ applications gradient ascent generates a trajectory of variational parameters $(\lambda_0, \ldots, \lambda_k)$, where we use the final parameter $\lambda_k$ for the approximation. Following the convention in \cite{hoffman2013stochastic}, we refer to this procedure as \emph{Stochastic Variational Inference} (SVI).

\subsection{Amortized Inference Suboptimality}
The SVI procedure introduces an inference \emph{subroutine} that optimizes the proposal distribution $q(z \scolon \lambda)$ \emph{per sample}, which is computationally costly. \cite{kingma2013auto,rezende2014stochastic} observed that the computational cost of inference can be \emph{amortized} by introducing an inference model $f_\phi: \X \to \Lambda$, parameterized by $\phi$, that directly seeks to learn the mapping $x \mapsto \lambda^*$ from each sample $x$ to an optimal $\lambda^*$ that solves the maximization problem
\begin{align}
    \lambda^* = \argmax_{\lambda} \Expect_{q(z \scolon \lambda)} \ln \frac{p_\theta(x, z)}{q(z \scolon \lambda)}.
\end{align}
This yields the amortized ELBO optimization problem
\begin{align}\label{eq:aelbo}
    \max_{\theta, \phi} \Expect_{\hat{p}(x)} \brac{ \Expect_{q(z \scolon f_\phi(x))} \ln \frac{p_\theta(x, z)}{q(z \scolon f_\phi(x))}},
\end{align}
where $q(z \scolon f_\phi(x))$ can be concisely rewritten (with a slight abuse of notation) as $q_\phi(z \giv x)$ to yield the standard variational autoencoder objective \cite{kingma2013auto}.

While computationally efficient, the influence of the amortized inference model on the training dynamics of the generative model has recently come under scrutiny \cite{cremer2018inference,krishnan2017challenges,kim2018semi,shu2018amortized}. A notable consequence of amortization is the \emph{amortization gap}
\begin{align}
    \KL{q_\phi(z \giv x)}{p_\theta(z \giv x)} - \KL{q(z \scolon \lambda^*)}{p_\theta(z \giv x)}
\end{align}
which measures the additional error incurred when the amortized inference model is used instead of the optimal $\lambda^*$ for approximating the posterior \cite{cremer2018inference}.
A large amortization gap can present a potential source of concern since it  introduces further deviation from the maximum likelihood objective \cite{shu2018amortized}.

\subsection{Amortization-SVI Hybrids}

To close the amortization gap, \cite{krishnan2017challenges} proposed to blend amortized inference with SVI. Since SVI requires one to initialize $\lambda_0$, a natural solution is to set $\lambda_0 = f_\phi(x)$. Thus, SVI is allowed to fine-tune the initial proposal distribution found by the amortized inference model and reduce the amortization gap. Rather than optimizing $\theta, \phi$ jointly with the amortized ELBO objective \cref{eq:aelbo}, the training of the inference and generative models is now decoupled; $\phi$ is trained to optimize the amortized ELBO objective, but $\theta$ is trained to approximately optimize \cref{eq:svi_intro}, where $\lambda^* \approx \lambda_k$ is approximated via SVI. To enable end-to-end training of the inference and generative models, \cite{kim2018semi} proposed to backpropagate through the SVI steps via a finite-difference estimation of the necessary Hessian-vector products. Alternatively, \cite{marino2018iterative} adopts a learning-to-learn framework where an inference model iteratively outputs $\lambda_{i+1}$ as a function of $\lambda_i$ and the ELBO gradient.


\section{Buffered Stochastic Variational Inference}
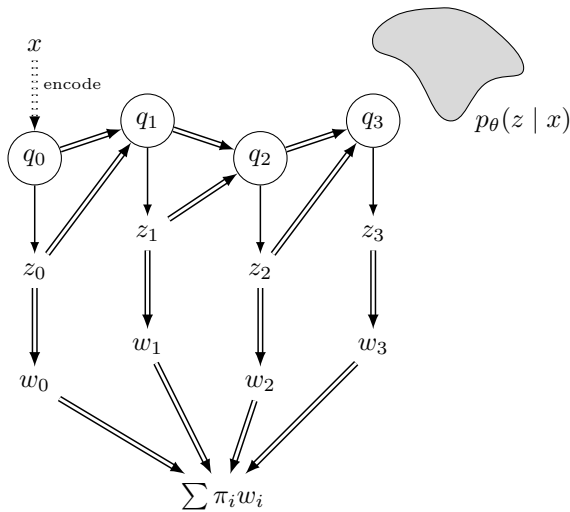
\begin{figure}[!h]
\centering
\begin{tikzpicture}
\node[none] (x) {$x$};
\node[none, yshift=-.5cm, xshift=+.5cm] () {{\tiny encode}};

\node[circ, yshift=-1.5cm] (q0) {$q_0$};
\node[none, yshift=-3cm] (z0) {$z_0$};
\node[none, yshift=-4.5cm] (w0) {$w_0$};

\node[circ, xshift=+1.5cm, yshift=-1cm] (q1) {$q_1$};
\node[none, xshift=+1.5cm, yshift=-2.5cm] (z1) {$z_1$};
\node[none, xshift=+1.5cm, yshift=-4cm] (w1) {$w_1$};

\node[circ, xshift=+3cm, yshift=-1.5cm] (q2) {$q_2$};
\node[none, xshift=+3cm, yshift=-3cm] (z2) {$z_2$};
\node[none, xshift=+3cm, yshift=-4.5cm] (w2) {$w_2$};

\node[circ, xshift=+4.5cm, yshift=-1cm] (q3) {$q_3$};
\node[none, xshift=+4.5cm, yshift=-2.5cm] (z3) {$z_3$};
\node[none, xshift=+4.5cm, yshift=-4cm] (w3) {$w_3$};

\node[none, xshift=+6.5cm, yshift=-1cm] (p) {$p_\theta(z \giv x)$};
\node[none, xshift=+2.5cm, yshift=-6cm] (bsvi) {$\sum \pi_i w_i$};
\draw
(x) edge [dotdoubleconnect] (q0)

(q0) edge [doubleconnect] (q1)
(q1) edge [doubleconnect] (q2)
(q2) edge [doubleconnect] (q3)

(q0) edge [connect] (z0)
(z0) edge [doubleconnect] (w0)
(z0) edge [doubleconnect] (q1)
(w0) edge [doubleconnect] (bsvi)

(q1) edge [connect] (z1)
(z1) edge [doubleconnect] (w1)
(z1) edge [doubleconnect] (q2)
(w1) edge [doubleconnect] (bsvi)

(q2) edge [connect] (z2)
(z2) edge [doubleconnect] (w2)
(z2) edge [doubleconnect] (q3)
(w2) edge [doubleconnect] (bsvi)

(q3) edge [connect] (z3)
(z3) edge [doubleconnect] (w3)
(w3) edge [doubleconnect] (bsvi)
;
\draw[fill=gray!30] plot[smooth, tension=.8] coordinates {(5.5, 0.5) (6, 0.3) (6.5, -0.3) (6, -0.5) (5.5, -1) (5, -0.3) (4.5, 0) (4.8, 0.4) (5.5, 0.5)};
\end{tikzpicture}
\caption{Idealized visualization of Buffered Stochastic Variational Inference. Double arrows indicate deterministic links, and single arrows indicate stochastic links that involve sampling. The dotted arrow from $x$ to $q_0$ denotes that the initial variational parameters are given by the encoder. For notational simplicity, we omitted the dependence of $q_{1:k}$ on $x$ and the model parameters $\phi, \theta$. 
} 
\label{fig:bsvi}
\end{figure}

In this paper, we focus on the simpler, decoupled training procedure described by \cite{krishnan2017challenges} and identify a new way of improving the SVI training procedure (orthogonal to the end-to-end approaches in \cite{kim2018semi,marino2018iterative}). Our key observation is that, as part of the gradient ascent estimation in \cref{eq:svi_grad}, the SVI procedure necessarily generates a sequence of importance weights $(w_0, \ldots, w_k)$, where $w_i = w(z_i \scolon \lambda_i, \theta)$. 
Since $(\ln w_k)$ likely achieves the highest ELBO, the intermediate weights $(w_0, \ldots, w_{k-1})$ are subsequently discarded in the SVI training procedure, and only $\nabla_\theta \ln w_k$ is retained for updating the generative model parameters. However, if the preceding proposal distributions $(q_{k-1}, q_{k-2}, \ldots)$ are also reasonable approximations of the posterior, then it is potentially wasteful to discard their corresponding importance weights.
A natural question to ask then is whether the full trajectory of weights $(w_0, \ldots, w_k)$ can be leveraged to further improve the training of the generative model. 


Taking inspiration from IWAE's weight-averaging mechanism, we propose a modification to the SVI procedure where we simply keep a \emph{buffer} of the entire importance weight trajectory and use an average of the importance weights $\sum_i \pi_i w_i$ as the objective in training the generative model.\footnote{For simplicity, we use the uniform-weighting $\pi_i = 1/(k+1)$ in our base implementation of BSVI. In \cref{sec:hyperweight}, we discuss how to optimize $\pi$ during training.} The generative model is then updated with the gradient $\nabla_\theta \ln \sum_i \pi_i w_i$. We call this procedure \emph{Buffered Stochastic Variational Inference} (BSVI) and denote $\ln \sum_i \pi_i w_i$ as the BSVI objective. We describe the BSVI training procedure in \cref{alg:bsvi} and contrast it with SVI training. For notational simplicity, we shall always imply initialization with an amortized inference model when referring to SVI and BSVI.

\begin{algorithm}[!h]
	\caption{\small\textbf{Training with Buffered Stochastic Variational Inference}. We contrast training with SVI versus BSVI. We denote the stop-gradient operation with $\lceil\cdot\rceil$, reflecting that we do not backpropagate through the SVI steps.}
	\label{alg:bsvi}
	\begin{algorithmic}[1]
        \State \textbf{Inputs}: $\D = \set{x^{(1)}, \ldots, x^{(n)}}$.
        \For{$t = 1\ldots T$}
            \State $x \sim \D$
            \State $\lambda_0 \gets f_{\phi_t}(x)$
                \For{$i = 0\ldots k$}
                    \State $z_i \sim q(z \scolon \lambda_i)$ \Comment{reparameterize as $z_{\lambda_i}(\eps)$}
                    \State $w(z \scolon \lambda_i, \theta) \gets {p_{{\theta}}(x, z_i)}/{q(z_i \scolon \lambda_i)}$
                    \If{$i < k$}
                        \State $\lambda_{i + 1} \gets \lceil\lambda_i + \eta \nabla_{\lambda_i} \ln w(z \scolon \lambda_i, \theta)\rceil$
                    \EndIf
                \EndFor
            \State $\phi_{t+1} \gets \phi_t + \nabla_{\phi_t} \ln w(z_0 \scolon \lambda_0, \theta_t)$
            \If{Train with SVI}
                \State $\theta_{t+1} \gets \theta_t + \nabla_{\theta_t} \ln w(z_k \scolon \lambda_k, \theta_t)$
            \ElsIf{Train with BSVI}
                \State $\theta_{t+1} \gets \theta_t + \nabla_{\theta_t} \ln \sum_i \pi_i w(z_i \scolon \lambda_i, \theta_t)$
            \EndIf
        \EndFor
	\end{algorithmic}
\end{algorithm}

\begin{figure}[!h]

\begin{subfigure}[b]{\columnwidth}
\centering
\begin{tikzpicture}
\node[circ, xshift=+0cm] (l0) {$\lambda_0$};
\node[circ, xshift=+2cm] (l1) {$\lambda_1$};
\node[circ, xshift=+4cm] (l2) {$\lambda_2$};
\node[circ, xshift=+6cm] (l3) {$\lambda_3$};
\node[circ, xshift=+0cm, yshift=-2cm] (z0) {$z_0$};
\node[circ, xshift=+2cm, yshift=-2cm] (z1) {$z_1$};
\node[circ, xshift=+4cm, yshift=-2cm] (z2) {$z_2$};
\node[circ, xshift=+6cm, yshift=-2cm] (z3) {$z_3$};
\draw
(l0) edge [doubleconnect] (l1)
(l0) edge [connect] (z0)
(z0) edge [doubleconnect] (l1)
(l1) edge [doubleconnect] (l2)
(l1) edge [connect] (z1)
(z1) edge [doubleconnect] (l2)
(l2) edge [connect] (z2)
(l2) edge [doubleconnect] (l3)
(z2) edge [doubleconnect] (l3)
(l3) edge [connect] (z3)
;
\end{tikzpicture}
\caption{Dependent proposal distributions}
\label{fig:dependent_proposals}
\end{subfigure}

\begin{subfigure}[b]{\columnwidth}
\centering
\begin{tikzpicture}
\node[circ, xshift=+0cm] (z0) {$z_0$};
\node[circ, xshift=+2cm] (z1) {$z_1$};
\node[circ, xshift=+4cm] (z2) {$z_2$};
\node[circ, xshift=+6cm] (z3) {$z_3$};
\draw
(z0) edge [lconnect, bend left=45] (z1)
(z0) edge [lconnect, bend left=45] (z2)
(z0) edge [lconnect, bend left=45] (z3)
(z1) edge [lconnect, bend left=45] (z2)
(z1) edge [lconnect, bend left=45] (z3)
(z2) edge [lconnect, bend left=45] (z3)
;
\end{tikzpicture}
\caption{Dependent samples}
\label{fig:dependent_zs}
\end{subfigure}%

\caption{Graphical model for dependent proposal distributions and samples. When $\lambda_{1:k}$ is marginalized, the result is a joint distribution of dependent samples. For notational simplicity, the dependency on $\theta$ is omitted.}
\end{figure}
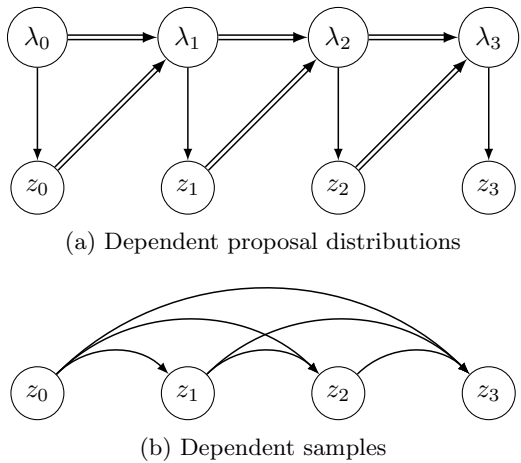

\section{Theoretical Analysis}

An important consideration is whether the BSVI objective serves as a valid lower bound to the log-likelihood $\ln p_\theta(x)$. A critical challenge in the analysis of the BSVI objective is that the trajectory of variational parameters $(\lambda_0, \ldots, \lambda_k)$ is actually a sequence of \emph{statistically-dependent random variables}. This statistical dependency is a consequence of SVI's stochastic gradient approximation in \cref{eq:svi_grad}. We capture this dependency structure in \Cref{fig:dependent_proposals}, which shows that each $\lambda_{i+1}$ is only deterministically generated after $z_i$ is sampled. When the proposal distribution parameters $\lambda_{0:k}$ are marginalized, the resulting graphical model is a joint distribution over $q(z_{0:k} \giv x)$. To reason about such a joint distribution, we introduce the following generalization of the IWAE bound.

\begin{restatable}{theorem}{ThmGIWAE}\label{thm:giwae}
Let $p(x,z)$ be a distribution where $z\in \mathcal{Z}$. Consider a joint proposal distribution $q(z_{0:k})$ over $\mathcal{Z}^k$. 
Let $v(i)\subset \set{0, \ldots, k} \setminus \set{i}$ for all $i$, and $\pi$ be a categorical distribution over $\set{0, \ldots, k}$. The following construction, which we denote the Generalized IWAE Bound, is a valid lower bound of the log-marginal-likelihood
\begin{align}
    \Expect_{q(z_{0:k})} \ln \sum_{i=0}^k \pi_i \frac{p(x, z_i)}{q(z_i \giv z_{v(i)})} \le \ln p(x),
\end{align}
\end{restatable}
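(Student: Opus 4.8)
The plan is to reduce the claim to Jensen's inequality by showing that the random variable inside the logarithm has expectation \emph{exactly} $p(x)$ under $q(z_{0:k})$. Writing $w_i = p(x, z_i)/q(z_i \giv z_{v(i)})$, concavity of $\ln$ gives $\Expect_{q(z_{0:k})} \ln \sum_{i=0}^k \pi_i w_i \le \ln \Expect_{q(z_{0:k})} \sum_{i=0}^k \pi_i w_i$, so it is enough to prove $\Expect_{q(z_{0:k})} \sum_{i=0}^k \pi_i w_i = p(x)$. By linearity of expectation and the fact that $\pi$ is a probability vector, this in turn reduces to the per-term identity $\Expect_{q(z_{0:k})} w_i = p(x)$ for each $i$.

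To establish that identity I would first observe that $w_i$ depends on $z_{0:k}$ only through the coordinates in $S_i := \set{i} \cup v(i)$, so I can marginalize out every coordinate outside $S_i$ and work with the marginal $q(z_{S_i})$ in place of the full joint. The crucial move is then the chain-rule factorization $q(z_{S_i}) = q(z_{v(i)})\, q(z_i \giv z_{v(i)})$, which holds for \emph{any} joint $q(z_{0:k})$ by the definition of conditional probability, regardless of the ordering or dependency structure; the factor $q(z_i \giv z_{v(i)})$ cancels the denominator of $w_i$, leaving $\int q(z_{v(i)}) \bigl( \int p(x, z_i)\, dz_i \bigr)\, dz_{v(i)} = p(x)$, since $\int p(x, z_i)\, dz_i = p(x)$ and $q(z_{v(i)})$ integrates to one. (When $v(i) = \emptyset$ the same computation goes through with $q(z_i \giv z_{v(i)})$ read as the marginal $q(z_i)$.) Summing the per-term identities with weights $\pi_i$ and applying the Jensen bound above then yields the theorem.

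I expect the calculation itself to be entirely routine; the one place I would be careful is the marginalization and the regularity conditions that make it legitimate---namely that $q(z_i \giv z_{v(i)})$ as it appears in the bound is genuinely the conditional induced by the joint $q(z_{0:k})$, and that $q(z_i \giv z_{v(i)}) > 0$ wherever $p(x, z_i) > 0$ so that each $w_i$ is well defined and the cancellation is valid. These are exactly the support assumptions implicit in the ELBO and the IWAE bound. As a sanity check, the derivation uses nothing about the index sets $v(i)$ or the weights $\pi$, so it correctly specializes to the ELBO when $k = 0$, to the IWAE bound when every $v(i) = \emptyset$ and $q$ is an i.i.d.\ product, and to the BSVI objective in the setting of interest here.
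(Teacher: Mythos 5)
Your proposal is correct and follows essentially the same route as the paper's proof: both establish the per-term unbiasedness $\Expect_{q(z_{0:k})}\,p(x,z_i)/q(z_i \giv z_{v(i)}) = p(x)$ by factoring the relevant marginal as $q(z_{v(i)})\,q(z_i \giv z_{v(i)})$ so the conditional cancels, then combine via linearity of expectation and conclude with Jensen's inequality. Your added remarks on support conditions and the degenerate case $v(i)=\emptyset$ are sensible refinements but do not change the argument.
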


The proof follows directly from the linearity of expectation when using $q(z_{0:k})$ for importance-sampling to construct an unbiased estimate of $p_\theta(x)$, followed by application of Jensen's inequality. A detailed proof is provided in \cref{app:proofs}. 

Notably, if $q(z_{0:k}) = \prod_i q(z_i)$, then \cref{thm:giwae} reduces to the IWAE bound. \cref{thm:giwae} thus provides a generalization of IWAE, where the samples drawn are potentially \emph{non-independently and non-identically} distributed. 
\Cref{thm:giwae} thus provides a way to construct new lower bounds on the log-likelihood whenever one has access to a set of non-independent samples. 

In this paper, we focus on a special instance where a chain of samples is constructed from the SVI trajectory. We note that the BSVI objective can be expressed as
\begin{align}\label{eq:bsvi}
    \Expect_{q(z_{0:k} \giv x)} \ln \sum_{i=0}^k \pi_i w_i = 
    \Expect_{q(z_{0:k} \giv x)} \ln \sum_{i=0}^k \pi_i \frac{p_\theta(x, z_i)}{q(z_i \giv z_{<i}, x)}.
\end{align}
Note that since $\lambda_i$ can be deterministically computed given $(x, z_{<i})$, it is therefore admissible to interchange the distributions $q(z_i \giv z_{<i}, x) = q(z_i \giv \lambda_i)$. The BSVI objective is thus a special case of the Generalized IWAE bound, where $z_{v(i)} = z_{<i}$ with auxiliary conditioning on $x$. Hence, the BSVI objective is a valid lower bound of $\ln p_\theta(x)$; we now refer to it as the BSVI bound where appropriate.

In the following two subsections, we address two additional aspects of the BSVI bound. First, we propose a method for ensuring that the BSVI bound is tighter than the Evidence Lower Bound achievable via SVI. Second, we provide an initial characterization of BSVI’s implicit \emph{sampling-importance-resampling} distribution.

\subsection{Buffer Weight Optimization}\label{sec:hyperweight}

Stochastic variational inference uses a series of gradient ascent steps to generate a final proposal distribution $q(z \giv \lambda_k)$. As evident from \Cref{fig:dependent_proposals}, the parameter $\lambda_k$ is in fact a random variable. The ELBO achieved via SVI, \emph{in expectation}, is thus
\begin{align}\label{eq:svi}
    \Expect_{q(z, \lambda_k \giv x)} \ln \frac{p_\theta(x, z)}{q_\phi(z \giv \lambda_k)}= \Expect_{q(z_{0:k} \giv x)} \ln w_k,
\end{align}
where the RHS re-expresses it in notation consistent with \cref{eq:bsvi}. We denote \cref{eq:svi} as the SVI bound. In general, the BSVI bound with uniform-weighting $\pi_i = 1 / (k + 1)$ is not necessarily tighter than the SVI bound. For example, if SVI's last proposal distribution exactly matches posterior $q_k(z) = p_\theta(z \giv x)$, then assigning equal weighting to across $(w_0, \ldots w_k)$ would make the BSVI bound looser. 

In practice, we observe the BSVI bound with uniform-weighting to consistently achieve a tighter lower bound than SVI's last proposal distribution. We attribute this phenomenon to the effectiveness of variance-reduction from averaging multiple importance weights---even when these importance weights are generated from dependent and non-identical proposal distributions. 

To guarantee that the BSVI is tighter than the SVI bound, we propose to optimize the buffer weight $\pi$. This guarantees a tighter bound,
\begin{align}\label{eq:bufferweight}
\max_\pi \Expect_{q(z_{0:k} \giv x)} \ln \sum_{i=0}^k \pi_i w_i \ge \Expect_{q(z_{0:k} \giv x)} \ln w_k,
\end{align}
since the SVI bound is itself a special case of the BSVI bound when $\pi = (0, \ldots, 0, 1)$. It is worth noting that \cref{eq:bufferweight} is concave with respect to $\pi$, allowing for easy optimization of $\pi$.

Although $\pi$ is a local variational parameter, we shall, for simplicity, optimize only a single global $\pi$ that we update with gradient ascent throughout the course of training. As such, $\pi$ is jointly optimized with $\theta$ and  $\phi$.

\subsection{Dependence-Breaking via Double-Sampling}

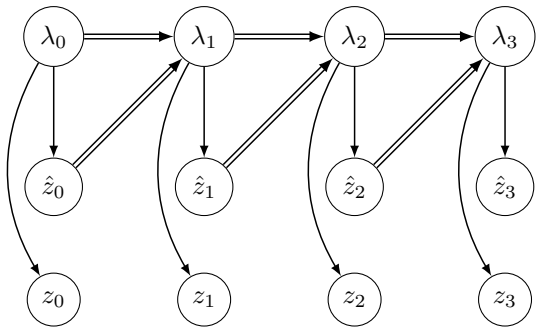
\begin{figure}[!h]
\centering
\begin{tikzpicture}
\node[circ, xshift=+0cm] (l0) {$\lambda_0$};
\node[circ, xshift=+2cm] (l1) {$\lambda_1$};
\node[circ, xshift=+4cm] (l2) {$\lambda_2$};
\node[circ, xshift=+6cm] (l3) {$\lambda_3$};
\node[circ, xshift=+0cm, yshift=-2cm] (zh0) {$\hat{z}_0$};
\node[circ, xshift=+2cm, yshift=-2cm] (zh1) {$\hat{z}_1$};
\node[circ, xshift=+4cm, yshift=-2cm] (zh2) {$\hat{z}_2$};
\node[circ, xshift=+6cm, yshift=-2cm] (zh3) {$\hat{z}_3$};
\node[circ, xshift=+0cm, yshift=-3.5cm] (z0) {$z_0$};
\node[circ, xshift=+2cm, yshift=-3.5cm] (z1) {$z_1$};
\node[circ, xshift=+4cm, yshift=-3.5cm] (z2) {$z_2$};
\node[circ, xshift=+6cm, yshift=-3.5cm] (z3) {$z_3$};
\draw
(l0) edge [doubleconnect] (l1)
(l1) edge [doubleconnect] (l2)
(l2) edge [doubleconnect] (l3)
(zh0) edge [doubleconnect] (l1)
(zh1) edge [doubleconnect] (l2)
(zh2) edge [doubleconnect] (l3)
(l0) edge [connect] (zh0)
(l1) edge [connect] (zh1)
(l2) edge [connect] (zh2)
(l3) edge [connect] (zh3)
(l0) edge [rconnect] (z0)
(l1) edge [rconnect] (z1)
(l2) edge [rconnect] (z2)
(l3) edge [rconnect] (z3)
;
\end{tikzpicture}
\caption{Graphical model for double sampling. Notice that the samples $z_{0:k}$ are now independent given $\lambda_{0:k}$ and $x$. Again the dependence on $\theta$ is omitted for notational simplicity.
}
\label{fig:double_sampling}
\end{figure}

As observed in \cite{hoffman2013stochastic}, taking the gradient of the log-likelihood with respect to $\theta$ results in the expression
\begin{align}
    \nabla_\theta \ln p_\theta(x) = \Expect_{p_\theta(z \giv x)} \nabla_\theta \ln p_\theta(x, z).
\end{align}
We note that gradient of the ELBO with respect to $\theta$ results in a similar expression
\begin{align}
    \nabla_\theta \text{ELBO}(x) = \Expect_{q_\phi(z \giv x)} \nabla_\theta \ln p_\theta(x, z).
\end{align}
As such, the ELBO gradient differs from log-likelihood gradient only in terms of the distribution applied by the expectation operator. To approximate the log-likelihood gradient, we wish to set $q_\phi(z \giv x)$ close to $p_\theta(z \giv x)$ under some divergence.

We now show what results from computing the gradient of the BSVI objective.
\begin{restatable}{lemma}{LemBSVIGrad}\label{thm:bsvigrad}
The BSVI gradient with $\theta$ is
\begin{align}
    \nabla_\theta \text{BSVI}(x) &= \Expect_{q_\sir(z \giv x)} \nabla_\theta \ln p_\theta(x, z),
\end{align}
where $q_\sir$ is a sampling-importance-resampling procedure defined by the generative process
\begin{align}
    z_{0:k} &\sim q(z_{0:k} \giv x)\\
    i &\sim r(i \giv z_{0:k}) \\
    z &\gets z_i,
\end{align}
and $r(i \giv z_{0:k}) = ({\pi_i w_i})/({\sum_j \pi_j w_j})$ is a probability mass function over $\set{0, \ldots, k}$. 
\end{restatable}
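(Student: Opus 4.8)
The plan is to differentiate the BSVI objective directly, keeping careful track of which quantities carry a dependence on $\theta$. Recall from \cref{alg:bsvi} that the SVI gradient steps are wrapped in a stop-gradient, so the variational parameters $\lambda_{0:k}$ — and hence the reparameterized draws $z_i = z_{\lambda_i}(\eps_i)$ together with the denominators $q(z_i \giv \lambda_i) = q(z_i \giv z_{<i}, x)$ appearing in the $w_i$ — are all treated as constants with respect to $\theta$. Consequently the only $\theta$-dependence in $\text{BSVI}(x) = \Expect_{q(z_{0:k} \giv x)} \ln \sum_{i} \pi_i w_i$ resides in the numerators $p_\theta(x, z_i)$, and since the sampling law $q(z_{0:k}\giv x)$ is itself $\theta$-free I may interchange $\nabla_\theta$ with the outer expectation:
\[
\nabla_\theta \text{BSVI}(x) = \Expect_{q(z_{0:k}\giv x)} \nabla_\theta \ln \sum_{i=0}^k \pi_i w_i .
\]

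Next I would apply the log-derivative identity term by term. Because the denominator of each $w_i$ does not depend on $\theta$, we have $\nabla_\theta w_i = w_i \nabla_\theta \ln w_i = w_i \nabla_\theta \ln p_\theta(x, z_i)$, so the quotient rule yields
\[
\nabla_\theta \ln \sum_{i} \pi_i w_i = \frac{\sum_i \pi_i w_i \nabla_\theta \ln p_\theta(x, z_i)}{\sum_j \pi_j w_j} = \sum_{i=0}^k r(i \giv z_{0:k})\, \nabla_\theta \ln p_\theta(x, z_i),
\]
which is exactly a convex combination of the per-sample score terms weighted by the resampling mass function $r(i \giv z_{0:k}) = \pi_i w_i / \sum_j \pi_j w_j$. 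Folding this inner sum into an expectation over $i \sim r(\cdot \giv z_{0:k})$ and combining with the outer expectation over $z_{0:k} \sim q(z_{0:k}\giv x)$, the tower property gives
\[
\nabla_\theta \text{BSVI}(x) = \Expect_{q(z_{0:k}\giv x)} \Expect_{r(i \giv z_{0:k})} \nabla_\theta \ln p_\theta(x, z_i) = \Expect_{q_\sir(z\giv x)} \nabla_\theta \ln p_\theta(x, z),
\]
by the very definition of $q_\sir$ as the law of $z \gets z_i$ under $z_{0:k}\sim q(z_{0:k}\giv x)$, $i \sim r(i\giv z_{0:k})$.

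The only real subtlety — and what I expect to be the one point needing care — is the stop-gradient bookkeeping: one must justify that $q(z_{0:k}\giv x)$ contributes no $\theta$-gradient (so $\nabla_\theta$ passes through the outer expectation) and that the denominators $q(z_i \giv \lambda_i)$ drop out of the score, even though in an un-truncated computation graph each $\lambda_i$ would depend on $\theta$ through the SVI updates. Once that is settled, the remainder is the standard log-derivative manipulation and linearity of expectation.
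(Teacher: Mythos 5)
Your proof is correct and follows essentially the same route as the paper's: interchange $\nabla_\theta$ with the outer expectation, apply the log-derivative rule to turn $\nabla_\theta \ln \sum_i \pi_i w_i$ into $\Expect_{r(i \giv z_{0:k})} \nabla_\theta \ln p_\theta(x, z_i)$, and reinterpret the double expectation as $q_\sir$. The only difference is that you make explicit the stop-gradient bookkeeping (that $q(z_{0:k}\giv x)$ and the denominators carry no $\theta$-dependence), which the paper's two-line proof leaves implicit.
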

A detailed proof is provided in \cref{app:proofs}.

A natural question to ask is whether BSVI's $q_\sir$ is closer to the posterior than $q_k$ in expectation. To assist in this analysis, we first characterize a particular instance of the Generalized IWAE bound when $(z_1, \ldots, z_k)$ are independent but \emph{non-identically distributed}.

\begin{restatable}{theorem}{ThmIndGIWAE}\label{thm:indgiwae}
When $q(z_{0:k}) = \prod_i q_i(z_i)$, the implicit distribution $q_\sir(z)$ admits the inequality
\begin{align}
\Expect_{q_\sir(z)}\ln \frac{p_\theta(x, z)}{q_\sir(z)} 
\ge \Expect_{q(z_{0:k})} \ln \sum_{i=0}^k \pi_i w_i \\
= \Expect_{q(z_{0:k})} \ln \sum_{i=0}^k \pi_i \frac{p_\theta(x, z)}{q_i(z_i)}.
\end{align}
\end{restatable}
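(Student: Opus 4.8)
The plan is to rewrite each side of the inequality as $\ln p_\theta(x)$ minus a Kullback--Leibler divergence and then reduce the claim to a single application of the data-processing inequality. For the left-hand side this is immediate: since $q_\sir(z)$ is an honest distribution on $\Z$,
\begin{align*}
\Expect_{q_\sir(z)} \ln \frac{p_\theta(x,z)}{q_\sir(z)} = \ln p_\theta(x) - \KL{q_\sir(z)}{p_\theta(z \giv x)}.
\end{align*}
For the right-hand side, write $Q(z_{0:k}) = \prod_i q_i(z_i)$ and $\bar w(z_{0:k}) = \sum_{i=0}^k \pi_i w_i$ with $w_i = p_\theta(x,z_i)/q_i(z_i)$ (so the second equality in the statement is just the definition of $w_i$). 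As in the proof of \cref{thm:giwae}, $\Expect_Q[\bar w] = p_\theta(x)$, so
\begin{align*}
\tilde p(z_{0:k}) := \frac{\bar w(z_{0:k})}{p_\theta(x)}\, Q(z_{0:k})
\end{align*}
is a valid density on $\Z^{k+1}$, and dividing inside the logarithm by $p_\theta(x)$ gives
\begin{align*}
\Expect_{Q(z_{0:k})} \ln \sum_{i=0}^k \pi_i w_i = \ln p_\theta(x) - \KL{Q(z_{0:k})}{\tilde p(z_{0:k})}.
\end{align*}
Hence it suffices to prove $\KL{q_\sir(z)}{p_\theta(z \giv x)} \le \KL{Q(z_{0:k})}{\tilde p(z_{0:k})}$.

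To do that, I would introduce the resampling Markov kernel $T$ from $\Z^{k+1}$ to $\Z$ that maps $z_{0:k}$ to $z_i$ with $i$ drawn from $r(i \giv z_{0:k}) = \pi_i w_i / \bar w(z_{0:k})$; this is exactly the kernel appearing in \cref{thm:bsvigrad}, so $q_\sir = T Q$ by definition. The crux is the identity $T\tilde p = p_\theta(z \giv x)$: when $\tilde p$ is pushed through $T$, the factor $\bar w$ in $\tilde p$ cancels the normalizer of $r(\cdot \giv z_{0:k})$, the identity $q_i(z_i)\, w_i = p_\theta(x,z_i)$ replaces the $i$-th proposal factor by the joint, and the remaining $k$ coordinates integrate to one because $Q$ is a product; what is left is $\sum_i \pi_i\, p_\theta(x,z)/p_\theta(x) = p_\theta(z \giv x)$. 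Given this, the data-processing inequality for the common kernel $T$ yields
\begin{align*}
\KL{q_\sir(z)}{p_\theta(z \giv x)} = \KL{T Q}{T\tilde p} \le \KL{Q(z_{0:k})}{\tilde p(z_{0:k})},
\end{align*}
and combining with the two displays above proves the theorem.

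I expect the verification of $T\tilde p = p_\theta(z \giv x)$ to be the main obstacle: it is the cancellation of $\bar w$ that simultaneously produces the posterior and explains why a single kernel transports $Q \mapsto q_\sir$ and $\tilde p \mapsto p_\theta(z \giv x)$ at once, which is precisely what makes the data-processing step applicable. A secondary point requiring a little care is that $r(\cdot \giv z_{0:k})$, and hence $T$, are only defined where $\bar w > 0$; this holds $Q$-almost surely and $\tilde p$ is absolutely continuous with respect to $Q$, so it causes no trouble. I also note that the argument uses only independence of the $z_i$, not identical distribution, so it in fact goes through for any product proposal $\prod_i q_i(z_i)$.
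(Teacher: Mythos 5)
Your proof is correct, and it takes a genuinely different route from the paper's. The paper works directly with the density of $q_\sir$: it writes $q_\sir(z) = \Expect_{\pi(i)}\Expect_{q(z_{-i})}\brac{\bv_i(z)\,q_i(z \giv z_{-i})}$ as a mixture of unnormalized components, applies Jensen's inequality to the convex \emph{unnormalized} KL divergence $\uKL{\cdot}{p_\theta(x,\cdot)}$, and then cancels the residual term $\ln\paren{q(z_i \giv z_{<i})/q(z_i \giv z_{-i})}$ using the product structure of $q(z_{0:k})$. You instead lift everything to the extended space: you identify the right-hand side as $\ln p_\theta(x) - \KL{Q}{\tilde p}$ for the extended-space target $\tilde p \propto \bar w\, Q$, verify that the resampling kernel $T$ simultaneously transports $Q \mapsto q_\sir$ and $\tilde p \mapsto p_\theta(z \giv x)$, and invoke the data-processing inequality. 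Both arguments ultimately rest on convexity of a KL-type functional, but yours is cleaner and, more importantly, strictly more general: the identity $T\tilde p = p_\theta(z \giv x)$ only requires $\Expect_{q(z_{v(i)}, z_i)}\brac{w_i\,\delta_z(z_i)} = p_\theta(x,z)$, which holds for an arbitrary joint $q(z_{0:k})$ with $w_i = p_\theta(x,z_i)/q(z_i \giv z_{v(i)})$ --- the extra coordinates marginalize out without any independence assumption. Your argument therefore appears to settle affirmatively the question the paper leaves open after \cref{thm:indgiwae}, namely whether the inequality survives in the non-independent (BSVI) case; the only caveats are the support issues you already flag ($\bar w > 0$ $Q$-a.s., absolute continuity of $\tilde p$ w.r.t.\ $Q$). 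The one step you label as the "main obstacle," the cancellation inside $T\tilde p$, is exactly the computation you sketch and goes through without difficulty, so there is no gap.
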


\Cref{thm:indgiwae} extends the analysis by \cite{cremer2017reinterpreting} from the i.i.d. case (i.e. the standard IWAE bound) to the non-identical case (proof in \cref{app:proofs}). It remains an open question whether the inequality holds for the non-independent case. 

Since the BSVI objective employs dependent samples, it does not fulfill the conditions for \cref{thm:indgiwae}. To address this issue, we propose a variant, \emph{BSVI with double-sampling} (BSVI-DS), that breaks dependency by drawing two samples at each SVI step: $\hat{z}_i$ for computing the SVI gradient update and $z_i$ for computing the BSVI importance weight $w_i$. 
The BSVI-DS bound is thus
\begin{align}
    \Expect_{q(\hat{z}_{<k}\giv x)}\paren{\Expect_{q({z}_{0:k} \giv \hat{z}_{<k}, x)} \ln \sum_{i=0}^k \pi_i \frac{p_\theta(x, z)}{q(z_i \giv \hat{z}_{<k}, x)}},
\end{align}
where $q({z}_{0:k} \giv \hat{z}_{<k}, x) = \prod_i q(z_i \giv \hat{z}_{<k}, x)$ is a product of independent but non-identical distributions when conditioned on $(\hat{z}_{<k}, x)$. Double-sampling now allows us to make the following comparison.

\begin{restatable}{corollary}{CorBSVIDS}\label{thm:bsvi-ds}
Let $q_k = q(z_k \giv \hat{z}_{<i}, x)$ denote the proposal distribution found by SVI. For any choice of $(\hat{z}_{<i}, x)$, the distribution $q_\sir$ implied by BSVI-DS (with optimal weighting $\pi^*$) is at least as close to $p_\theta(z \giv x)$ as $q_k$,
\begin{align}
\KL{q_\sir}{p_\theta(z \giv x)} \le \KL{q_k}{p_\theta(z \giv x)},
\end{align}
as measured by the Kullback-Leibler divergence.
\end{restatable}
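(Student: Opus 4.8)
The plan is to apply \Cref{thm:indgiwae} conditionally, exploiting exactly the property that motivates BSVI-DS. Conditioned on $(\hat z_{<k}, x)$, the double-sampling construction of \Cref{fig:double_sampling} draws the importance samples $z_{0:k}$ from the product $\prod_i q(z_i \giv \hat z_{<k}, x)$, so they are independent but non-identically distributed --- precisely the hypothesis of \Cref{thm:indgiwae}. First I would fix $(\hat z_{<k}, x)$ and, for an arbitrary categorical $\pi$, write $q_i = q(z_i \giv \hat z_{<k}, x)$, let $q_\sir^\pi$ denote the sampling-importance-resampling distribution of \Cref{thm:bsvigrad} built from these $q_i$ and this $\pi$, and set $B(\pi) = \Expect_{q(z_{0:k}\giv\hat z_{<k},x)}\ln\sum_{i=0}^k \pi_i\, p_\theta(x,z_i)/q_i(z_i)$. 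Then \Cref{thm:indgiwae} gives
\begin{align}
    \ln p_\theta(x) - \KL{q_\sir^\pi}{p_\theta(z\giv x)} = \Expect_{q_\sir^\pi(z)}\ln\frac{p_\theta(x,z)}{q_\sir^\pi(z)} \ge B(\pi).
\end{align}

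Next I would evaluate $B$ at the degenerate weighting $\pi = (0,\ldots,0,1)$: only the $i=k$ term survives, the expectation collapses onto the $z_k$-marginal $q_k$, and one obtains $B(0,\ldots,0,1) = \Expect_{q_k}\ln\big(p_\theta(x,z)/q_k(z)\big) = \ln p_\theta(x) - \KL{q_k}{p_\theta(z\giv x)}$, i.e.\ exactly the conditional SVI bound. Since $\pi^*$ maximizes $\pi \mapsto B(\pi)$ over the simplex --- a maximizer exists, the map being concave by the remark in \Cref{sec:hyperweight} --- we have $B(\pi^*) \ge B(0,\ldots,0,1)$. Chaining this with the displayed inequality at $\pi = \pi^*$ gives $\ln p_\theta(x) - \KL{q_\sir^{\pi^*}}{p_\theta(z\giv x)} \ge \ln p_\theta(x) - \KL{q_k}{p_\theta(z\giv x)}$, and cancelling $\ln p_\theta(x)$ delivers the claim.

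The main obstacle is not the algebra --- once \Cref{thm:indgiwae} is available this is a two-line chase --- but verifying that its independence hypothesis is genuinely met. \Cref{thm:indgiwae} is established only for products of independent proposals, whereas plain BSVI's samples form the dependent chain of \Cref{fig:dependent_proposals}; the conditional factorization afforded by double-sampling is therefore doing real work, and I would state this dependence-breaking step explicitly rather than gloss over it. I would also record a consistency check: at $\pi = (0,\ldots,0,1)$ the resampling law $r(i\giv z_{0:k})$ puts all its mass on $i=k$, so $q_\sir^{(0,\ldots,0,1)} = q_k$ and both inequalities above degenerate to equalities --- confirming that the strict improvement over SVI, when it occurs, is attributable entirely to the freedom to optimize $\pi$.
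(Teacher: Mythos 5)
Your proposal is correct and follows exactly the route the paper takes: apply \Cref{thm:indgiwae} conditionally on $(\hat z_{<k}, x)$ (where double-sampling makes the proposals independent but non-identical), note that the degenerate weighting $\pi=(0,\ldots,0,1)$ recovers the SVI bound so $B(\pi^*)\ge B(0,\ldots,0,1)$, and cancel $\ln p_\theta(x)$ to obtain the KL inequality. The paper states this only as a one-sentence remark after the corollary; your write-up supplies the same argument with the details (and the sanity check at $\pi=(0,\ldots,0,1)$) made explicit.
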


\cref{thm:bsvi-ds} follows from \cref{thm:indgiwae} and that the BSVI-DS bound under optimal $\pi^*$ is no worse than the SVI bound. Although the double-sampling procedure seems necessary for inequality in \cref{thm:bsvi-ds} to hold, in practice we do not observe any appreciable difference between BSVI and BSVI-DS.

\section{Computational Considerations}

Another important consideration is the speed of training the generative model with BSVI versus SVI. Since BSVI reuses the trajectory of weights $(w_0, \ldots, w_k)$ generated by SVI, the forward pass incurs the same cost. The backwards pass for BSVI, however, is $O(k)$ 
for $k$ SVI steps---in contrast to SVI's $O(1)$ cost. To make the cost of BSVI's backwards pass $O(1)$, we highlight a similar observation from the original IWAE study \cite{burda2015importance} that the gradient can be approximated via Monte Carlo sampling
\begin{align}
    \nabla_\theta \text{BSVI}(x) \approx \frac{1}{m} \sum_{i=1}^m \nabla_\theta \ln p_\theta(x, z^{(i)}),
\end{align}
where $z^{(i)}$ is sampled from BSVI's implicit distribution $q_\sir(z \giv x)$. We denote this as training \emph{BSVI with sample-importance-resampling} (BSVI-SIR). Setting $m = 1$ allows variational autoencoder training with BSVI-SIR to have the same wall-clock speed as training with SVI.

\section{Experiments}
\begin{table*}[!h]
\small
\centering
\caption{Test set performance on the Omniglot dataset. Note that $k = 9$ and $k' = 10$ (see \cref{sec:setup}). We approximate the log-likelihood with BSVI-$500$ bound (\cref{app:likelihood}). We additionally report the SVI-$500$ bound (denoted ELBO*) along with its KL and reconstruction decomposition.}
\label{table:omni_test}
\begin{center}
\begin{tabular}{l|c|ccc}
\bf{Model} & \bf{Log-likelihood} & \bf{ELBO*} & \bf{KL*} & \bf{Reconstruction*} \\
\hline
VAE                & -89.83 $\pm$ 0.03      & -89.88 $\pm$ 0.02      & 0.97 $\pm$ 0.13 & 88.91 $\pm$ 0.15 \\
IWAE-$k'$          & -89.02 $\pm$ 0.05      & -89.89 $\pm$ 0.06      & 4.02 $\pm$ 0.18 & 85.87 $\pm$ 0.15 \\
SVI-$k'$           & -89.65 $\pm$ 0.06      & \bf{-89.73 $\pm$ 0.05} & 1.37 $\pm$ 0.15 & 88.36 $\pm$ 0.20 \\
BSVI-$k$-SIR       & \bf{-88.80 $\pm$ 0.03} & -90.24 $\pm$ 0.06      & 7.52 $\pm$ 0.21 & \bf{82.72 $\pm$ 0.22} \\
\end{tabular}
\end{center}
\end{table*}

\begin{table*}[!h]
\small
\centering
\caption{Test set performance on the grayscale SVHN dataset.} 
\label{table:svhn_test}
\begin{center}
\begin{tabular}{l|c|ccc}
\textbf{Model} & \textbf{Log-likelihood} & \textbf{ELBO*} & \textbf{KL*} & \textbf{Reconstruction*} \\
\hline
VAE                & -2202.90 $\pm$ 14.95     & -2203.01 $\pm$ 14.96     & 0.40 $\pm$ 0.07  & 2202.62 $\pm$ 14.96 \\
IWAE-$k'$          & -2148.67 $\pm$ 10.11     & -2153.69 $\pm$ 10.94     & 2.03 $\pm$ 0.08  & 2151.66 $\pm$ 10.86 \\
SVI-$k'$           & -2074.43 $\pm$ 10.46     & -2079.26 $\pm$ 9.99      & 45.28 $\pm$ 5.01 & 2033.98 $\pm$ 13.38 \\
BSVI-$k$-SIR       & \bf{-2059.62 $\pm$ 3.54}      & \bf{-2066.12 $\pm$ 3.63}      & 51.24 $\pm$ 5.03 & \bf{2014.88 $\pm$ 5.30}
\end{tabular}
\end{center}
\end{table*}

\begin{table*}[!h]
\small
\centering
\caption{Test set performance on the FashionMNIST dataset.}
\label{table:fmnist_test}
\begin{center}
\begin{tabular}{l|c|ccc}
\bf{Model} & \bf{Log-likelihood} & \bf{ELBO*} & \bf{KL*} & \bf{Reconstruction*} \\
\hline
VAE & -1733.86 $\pm$ 0.84      & -1736.49 $\pm$ 0.73      & 11.62 $\pm$ 1.01 & 1724.87 $\pm$ 1.70 \\
IWAE-$k'$ & -1705.28 $\pm$ 0.66      & -1710.11 $\pm$ 0.72      & 33.04 $\pm$ 0.36 & 1677.08 $\pm$ 0.70 \\
SVI-$k'$ & -1710.15 $\pm$ 2.51      & -1718.39 $\pm$ 2.13      & 26.05 $\pm$ 1.90 & 1692.34 $\pm$ 4.03 \\
BSVI-$k$-SIR & \bf{-1699.44 $\pm$ 0.45}      & \bf{-1707.00 $\pm$ 0.49}      & 41.48 $\pm$ 0.12 & \bf{1665.52 $\pm$ 0.41}
\end{tabular}
\end{center}
\end{table*}

\subsection{Setup}\label{sec:setup}




We evaluated the performance of our method by training variational autoencoders with BSVI-SIR with buffer weight optimization (BSVI-SIR-$\pi$)) on the dynamically-binarized Omniglot, grayscale SVHN datasets, and FashionMNIST (a complete evaluation of all BSVI variants is available in \cref{app:all_performances}).
Our main comparison is against the SVI training procedure (as described in \cref{alg:bsvi}). We also show the performance of the standard VAE and IWAE training procedures. Importantly, we note that 
we have chosen to compare SVI-$k'$ and IWAE-$k'$ trained with $k' = 10$ against BSVI-$k$-SIR trained with $k = 9$ SVI steps. This is because that BSVI-$k$-SIR generates $k + 1$ importance weights.

For all our experiments, we use the same architecture as \cite{kim2018semi} (where the decoder is a PixelCNN) and train with the AMSGrad optimizer \cite{reddi2018adam}.
For grayscale SVHN, we follow \cite{kingma2016improved} and replaced \cite{kim2018semi}'s bernoulli observation model with a discretized logistic distribution model with a global scale parameter. Each model was trained for up to 200k steps with early-stopping based on validation set performance. For the Omniglot experiment, we followed the training procedure in \cite{kim2018semi} and annealed the KL term multiplier \cite{sonderby2016ladder,bowman2015generating} during the first $5000$ iterations.
We replicated all experiments four times and report the mean and standard deviation of all relevant metrics. For additional details, refer to \cref{app:setup}

\subsection{Log-Likelihood Performance}





For all models, we report the log-likelihood (as measured by BSVI-$500$). We additionally report the SVI-$500$ (ELBO*) bound along with its decomposition into rate (KL*) and distortion (Reconstruction*) components \cite{alemi2018fixing}. We highlight that KL* provides a \emph{fair} comparison of the rate achieved by each model without concern of misrepresentation caused by the amortized inference suboptimality.

\textbf{Omniglot}. \cref{table:omni_test} shows that BSVI-SIR outperforms SVI on the test set log-likelihood. BSVI-SIR also makes greater usage of the latent space (as measured by the lower Reconstruction*). 
Interestingly, BSVI-SIR's log-likelihoods are noticeably higher than its corresponding ELBO*, suggesting that BSVI-SIR has learned posterior distributions not easily approximated by the Gaussian variational family when trained on Omniglot.



\textbf{SVHN}. \cref{table:svhn_test} shows that BSVI-SIR outperforms SVI on test set log-likelihood. We observe that both BSVI-SIR and SVI significantly outperform both VAE and IWAE on log-likelihood, ELBO*, and Reconstruction*, demonstrating the efficacy of iteratively refining the proposal distributions found by amortized inference model during training.

\textbf{FashionMNIST}. \cref{table:fmnist_test} similarly show that BSVI-SIR outperforms SVI on test set log-likelihood. Here, BSVI achieves significantly better Reconstruction* as well as achieving higher ELBO* compared to VAE, IWAE, and SVI.

In \cref{table:omni_full_test,table:svhn_full_test,table:fmnist_full_test} (\cref{app:all_performances}), we also observe that the use of double sampling and buffer weight optimization does not make an appreciable difference than their appropriate counterparts, demonstrating the efficacy of BSVI even when the samples $(z_{0:k})$ are statistically dependent and the buffer weight is simply uniform.



\subsection{Stochastic Gradient as Regularizer}
\begin{figure}[!h]
\centering
\includegraphics[width=.9\columnwidth]{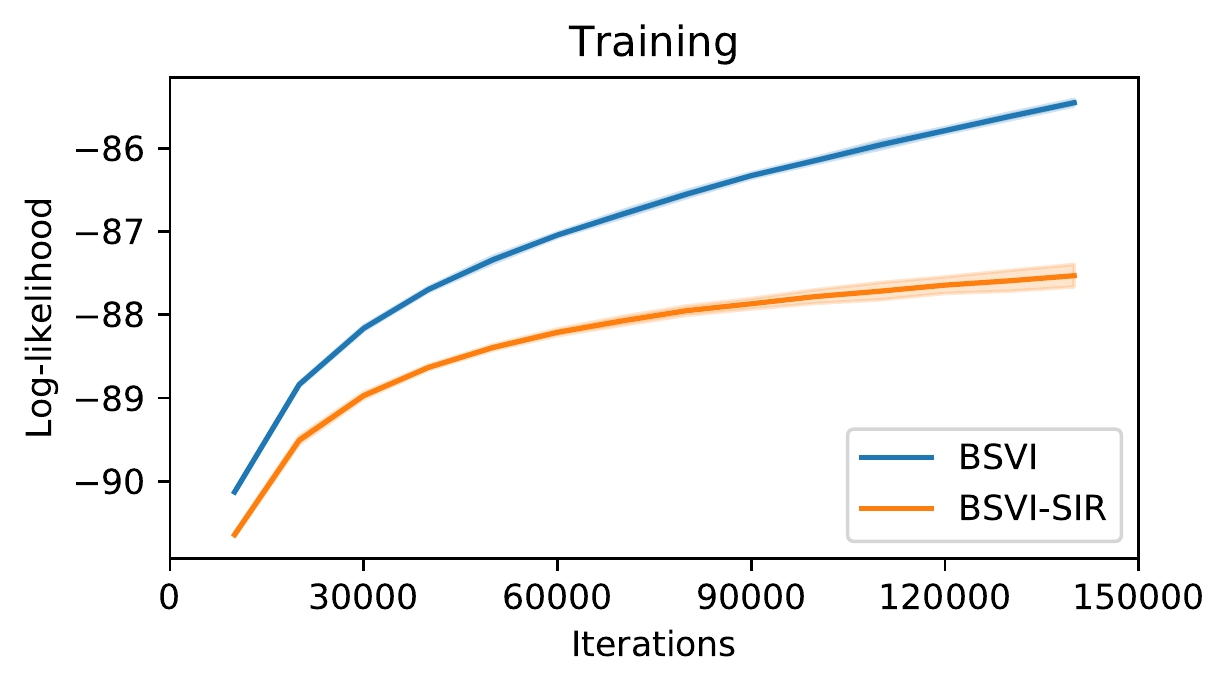}
\includegraphics[width=.9\columnwidth]{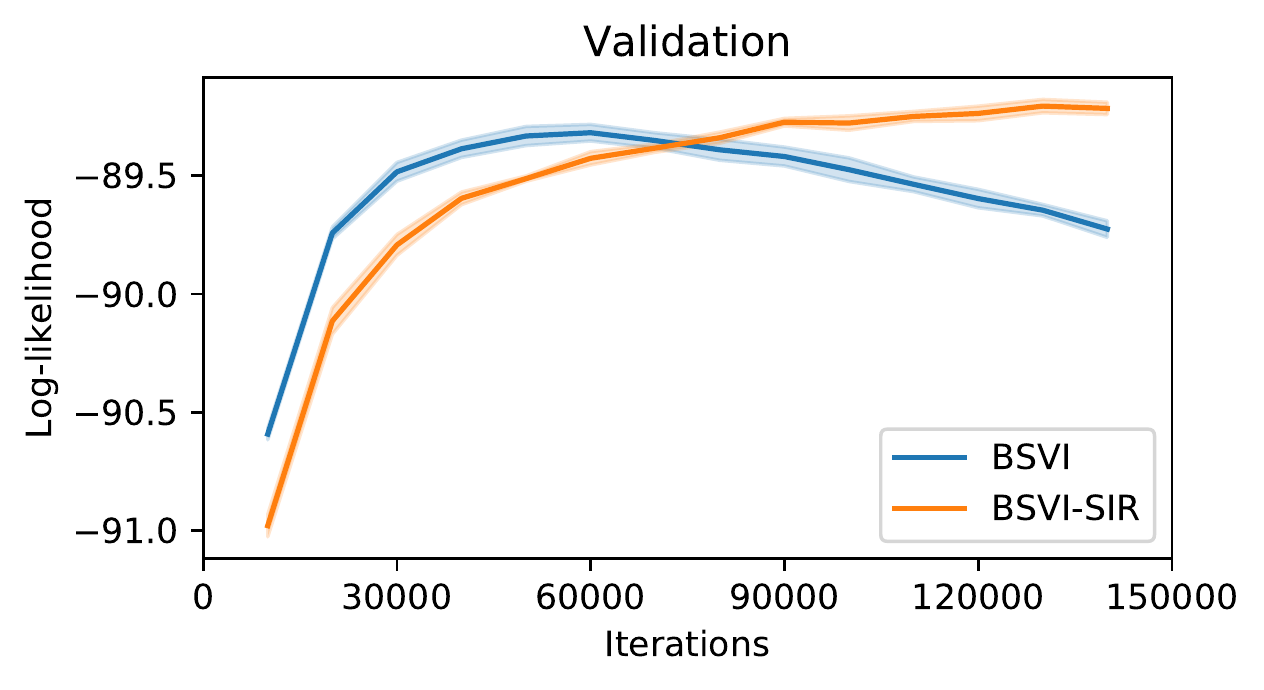}
\caption{Performance comparison between BSVI and BSVI-SIR on training (top) and validation (bottom) sets for Omniglot. Although BSVI achieves lower training loss, BSVI-SIR avoids overfitting and performs better on the test set.}
\label{fig:ad_vs_sir}
\end{figure}


Interestingly, \cref{table:omni_full_test} shows that BSVI-SIR can outperform BSVI on the test set despite having a higher variance gradient. We show in \Cref{fig:ad_vs_sir} that this is the result of BSVI overfitting the training set. The results demonstrate the regularizing effect of having noisier gradients and thus provide informative empirical evidence to the on-going discussion about the relationship between generalization and the gradient signal-to-noise ratio in variational autoencoders \cite{rainforth2018tighter,shu2018amortized}.


\subsection{Latent Space Visualization}


\begin{figure}[!h]
\centering
\includegraphics[width=\columnwidth]{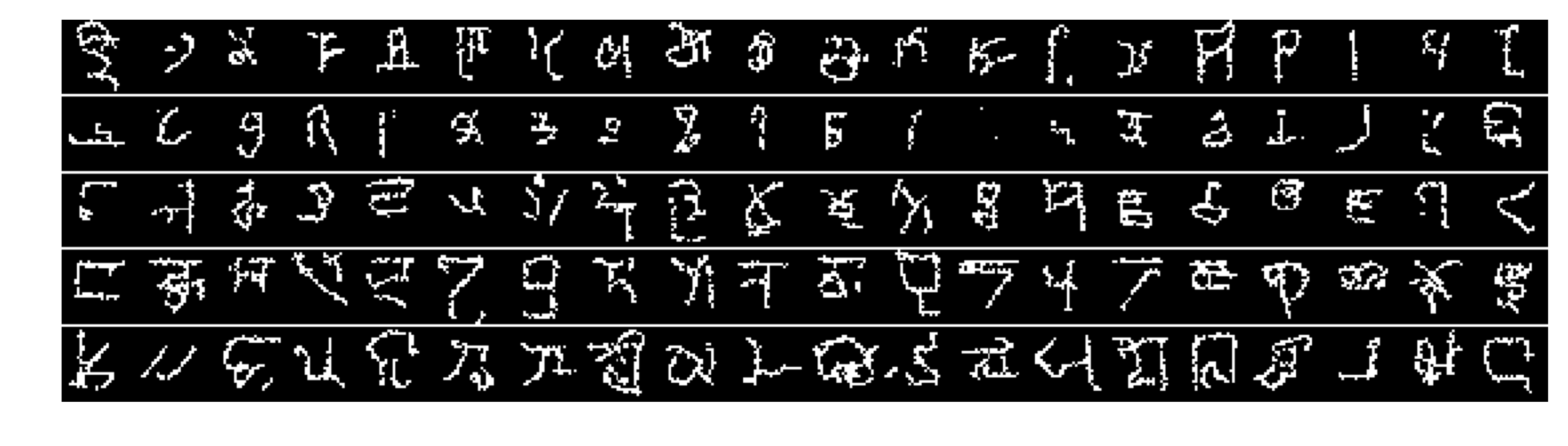}
\includegraphics[width=\columnwidth]{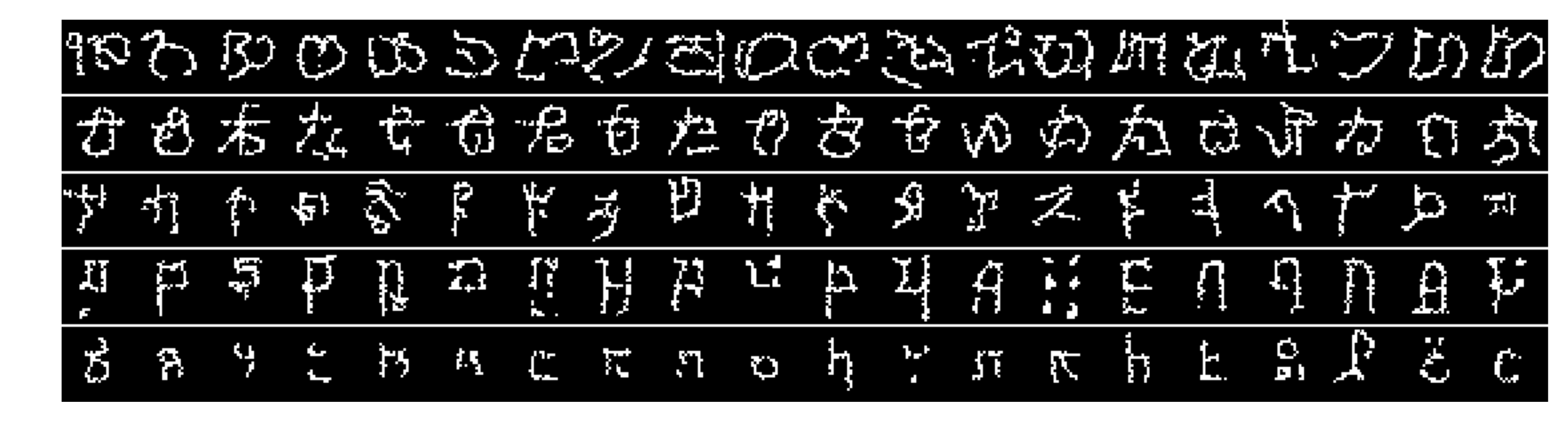}
\caption{Visualization of images sampled from decoder trained using SVI (top) and BSVI-SIR (bottom). Each row represents a different $z$ sampled from the prior. Conditioned on $z$, 20 images $x^{(1:20)} \sim p_\theta(x \giv z)$ are then sampled from the PixelCNN decoder.}
\label{fig:omniviz}
\end{figure}


\cref{table:omni_test} shows that the model learned by BSVI-SIR training has better Reconstruction* than SVI, indicating greater usage of the latent variable for encoding information about the input image. We provide a visualization of the difference in latent space usage in \Cref{fig:omniviz}. Here, we sample multiple images conditioned on a fixed $z$. Since BSVI encoded more information into $z$ than SVI on the Omniglot dataset, we see that the conditional distribution $p_\theta(x \giv z)$ of the model learned by BSVI has lower entropy (i.e. less diverse) than SVI.




\subsection{Analysis of Training Metrics}
\begin{figure}[!ht]
\centering
\begin{subfigure}[b]{\columnwidth}
\centering
\includegraphics[width=\columnwidth]{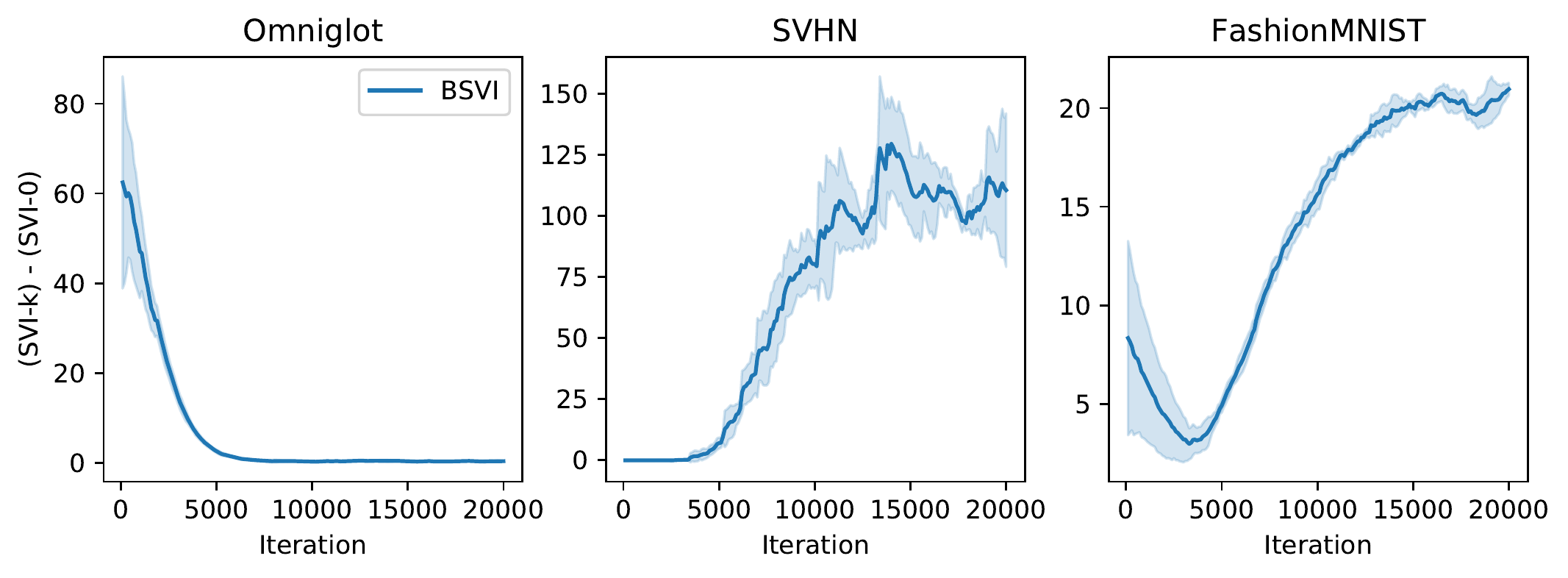}
\caption{Difference between lower bounds achieved by $q_k$ (SVI-$k$) and $q_0$ (SVI-$0$) during training.}
\label{fig:compare_svi_diff}
\end{subfigure}

\begin{subfigure}[b]{\columnwidth}
\centering
\includegraphics[width=\columnwidth]{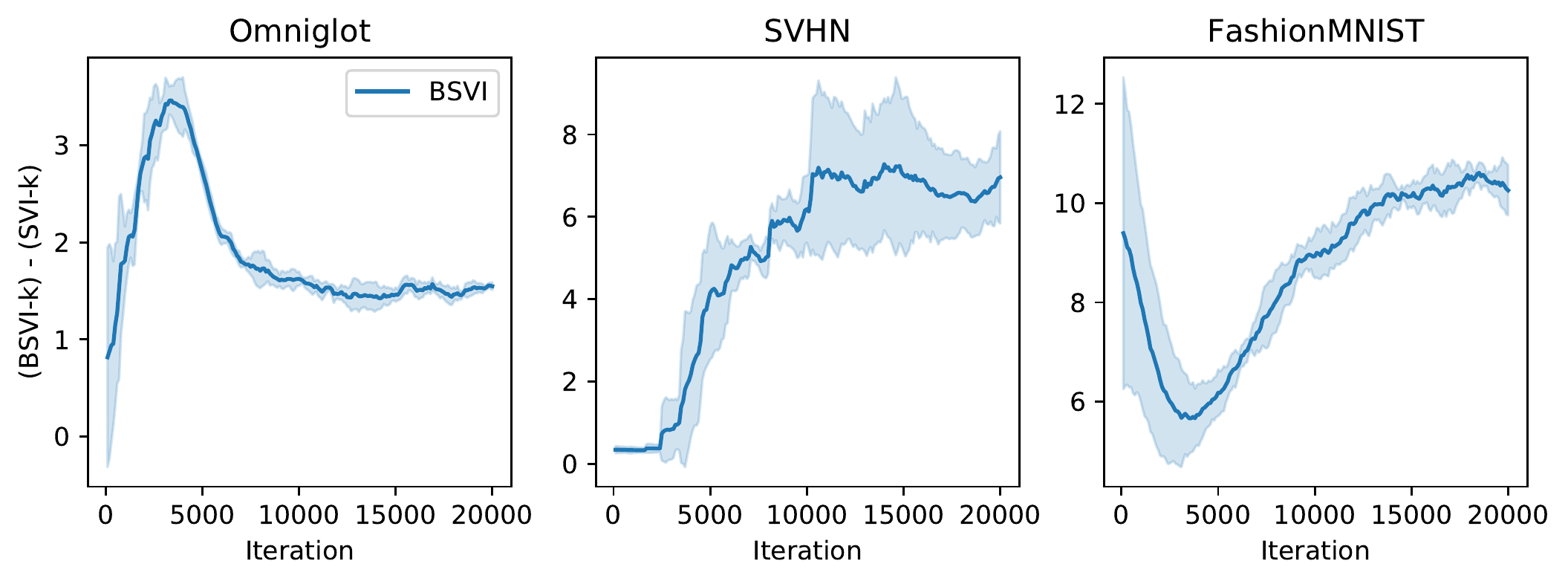}
\caption{Difference between the BSVI-$k$ bound and SVI-$k$ bound during training.}
\label{fig:compare_buff_diff}
\end{subfigure}

\begin{subfigure}[b]{\columnwidth}
\centering
\includegraphics[width=\columnwidth]{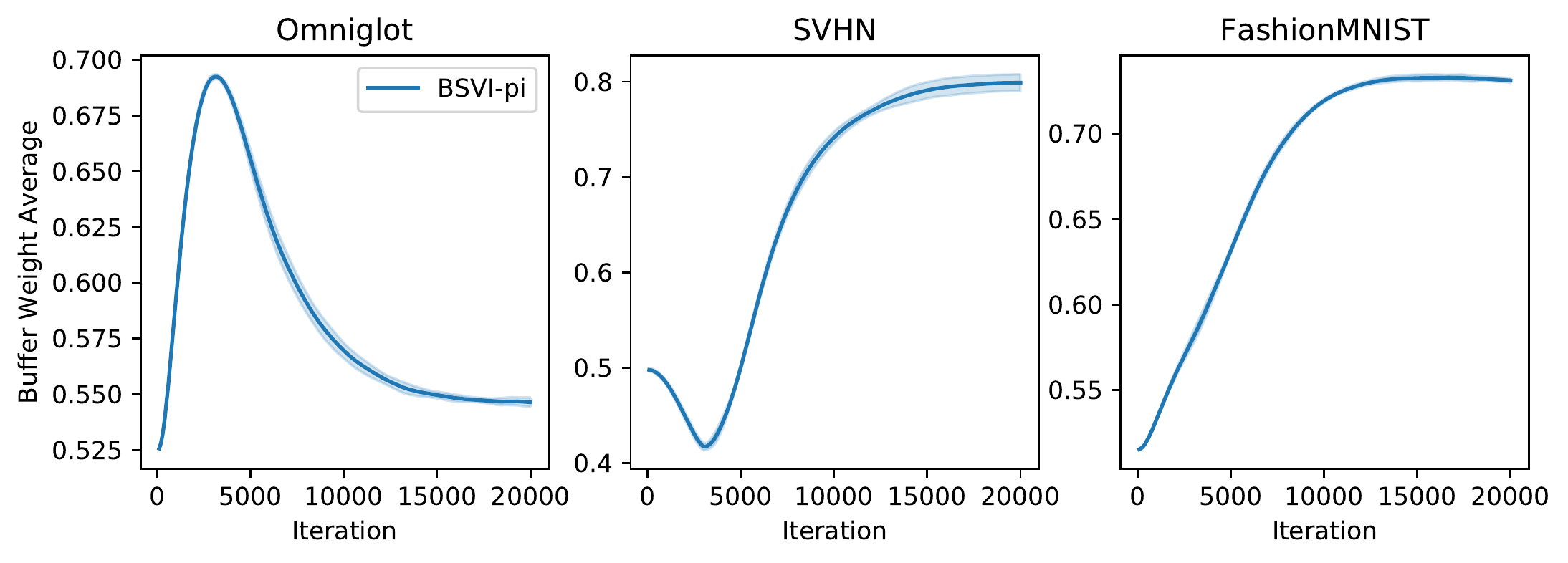}
\caption{Plot of the buffer weight average (defined as $\Expect_{\pi(i)} i / k$) during training when the buffer weight is optimized.}
\label{fig:compare_pi}
\end{subfigure}


\caption{Plots of metrics during BSVI-$k$ training, where $k = 9$. Since BSVI-$k$ uses SVI-$k$ as a subroutine, it is easy to check how the BSVI-$k$ bound compares against the SVI-$k$ and the amortized ELBO (SVI-$0$) bounds on a random mini-batch at every iteration during training.}
\end{figure}









Recall that the BSVI-$k$ training procedure runs SVI-$k$ as a subroutine, and therefore generates the trajectory of importance weights $(w_0, \ldots, w_k)$. Note that $\ln w_0$ and $\ln w_k$ are unbiased estimates of the ELBO achieved by the proposal distribution $q_0$ (SVI-$0$ bound) and $q_k$ (SVI-$k$ bound) respectively. It is thus possible to monitor the health of the BSVI training procedure by checking whether the bounds adhere to the ordering
\begin{align}
    \text{BSVI-}k \ge \text{SVI-}k \ge \text{SVI-}0
\end{align}
in expectation. \Cref{fig:compare_svi_diff,fig:compare_buff_diff} show that this is indeed the case. Since Omniglot was trained with KL-annealing \cite{kim2018semi}, we see in \Cref{fig:compare_svi_diff} that SVI plays a negligible role once the warm-up phase (first $5000$ iterations) is over. In contrast, SVI plays an increasingly large role when training on the more complex SVHN and FashionMNIST datasets, demonstrating that the amortization gap is a significantly bigger issue in the generative modeling of SVHN and FashionMNIST. \Cref{fig:compare_buff_diff} further shows that BSVI-$k$ consistently achieves a better bound than SVI-$k$. When the buffer weight is also optimized, we see in \Cref{fig:compare_pi} that $\pi$ learns to upweight the later proposal distributions in $(q_0, \ldots, q_k)$, as measured by the buffer weight average $\Expect_{\pi(i)} i/k$. For SVHN, the significant improvement of SVI-$k$ over SVI-$0$ results in $\pi$ being biased significantly toward the later proposal distributions. Interestingly, although \Cref{fig:compare_pi} suggests that the optimal buffer weight $\pi^*$ can differ significantly from naive uniform-weighting, we see from \cref{table:omni_test,table:svhn_test} that buffer weight optimization has a negligible effect on the overall model performance.

\section{Conclusion}

In this paper, we proposed \emph{Buffered Stochastic Variational Inference} (BSVI), a novel way to leverage the intermediate importance weights generated by stochastic variational inference. We showed that BSVI is effective at alleviating inference suboptimality and that training variational autoencoders with BSVI consistently outperforms its SVI counterpart, making BSVI an attractive and simple drop-in replacement for models that employ SVI. One promising line of future work is to extend the BSVI training procedure with end-to-end learning approaches in \cite{kim2018semi,marino2018iterative}. Additionally, we showed that BSVI procedure is a valid lower bound and belongs to general class of importance-weighted (Generalized IWAE) bounds where the importance weights are statistically dependent.
Thus, it would be of interest to study the implications of this bound for certain MCMC procedures such as Annealed Importance Sampling \cite{neal2001annealed} and others.

\subsubsection*{Acknowledgements}

We would like to thank Matthew D. Hoffman for his insightful comments and discussions during this project. This research was supported by NSF (\#1651565, \#1522054, \#1733686), ONR (N00014-19-1-2145), AFOSR (FA9550-19-1-0024), and FLI.

\bibliography{main}
\bibliographystyle{unsrt}

\clearpage
\onecolumn
\appendix
\section{Proofs}\label{app:proofs}

\ThmGIWAE*
\begin{proof}To show the validity of this lower bound, note that
\begin{align}
\Expect_{q(z_0, \ldots, z_k)} \brac{\sum_i \pi_i \frac{p_\theta(x, z_i)}{q(z_i \giv z_{v(i)})}} 
&= \sum_i \pi_i \Expect_{q(z_0, \ldots, z_k)} \frac{p_\theta(x, z_i)}{q(z_i \giv z_{v(i)})} \\
&= \sum_i \pi_i \Expect_{q(z_{v(i)})} \Expect_{q(z_i \giv z_{v(i)})} \frac{p_\theta(x, z_i)}{q(z_i \giv z_{v(i)})} \\
&= \sum_i \pi_i \Expect_{q(z_{v(i)})} p_\theta(x)\\
&= p_\theta(x).
\end{align}
Applying Jensen's inequality shows that the lower bound in the theorem is valid.
\end{proof}

\LemBSVIGrad*
\begin{proof}
\begin{align}
    \nabla_\theta \text{BSVI}(x) &= \Expect_{q(z_{0:k} \giv x)} \nabla_\theta \ln \sum_{i=0}^k \pi_i w_i\\
    &= \Expect_{q(z_{0:k} \giv x)} \Expect_{r(i \giv z_{0:k})} \nabla_\theta \ln p_\theta(x, z_i),
\end{align}
The double-expectation can now be reinterpreted as
the \emph{sampling-importance-resampling} distribution $q_\sir$.
\end{proof}

\newpage
\ThmIndGIWAE*
\begin{proof}
Recall that the $q_\sir$ is defined by the following sampling procedure
\begin{align}
(z_0, \ldots, z_k) &\sim q(z_0, \ldots, z_k)\\
i &\sim r(i \giv z_{0:k})\\
z &\gets z_i,
\end{align}
where 
\begin{align}
r(i \giv z_{0:k}) = \frac{\pi_i w_i}{\sum_j \pi_j w_j} = \frac{\pi_i \frac{p(x, z_i)}{q(z_i \giv z_{<i})}}{\sum_j \pi_j \frac{p(x, z_i)}{q(z_j \giv z_{<j})}}
\end{align}

We first note that, for any distribution $r(z)$
\begin{align}
    r(z) = \int_a r(a) \delta_z(a) \d a = \Expect_{r(a)} \delta_z(a).
\end{align}
This provides an intuitive way of constructing the probability density function by reframing it as a sampling process (the expectation w.r.t. $r(a)$) paired with a filtering procedure (the dirac-delta $\delta_z(a)$). Thus, the density under $q_\sir$ is thus
\begin{align}
    q_\sir(z) = \Expect_{q_(z_{0:k})} \Expect_{r(i \giv z_{0:k})} \delta_z(z_i).
\end{align}
Additionally, we shall introduce the following terms 
\begin{align}
    \tp(z) &= p_\theta(x, z)\\
    \bw_i &= \frac{\pi_i w_i}{\sum_j \pi_j w_j}\\
    \bv_i &= \frac{w_i}{\sum_j \pi_j w_j}\\
    \bv_i(z) &= \frac{w(z)}{\pi_i w(z) + \sum_{-i} \pi_j w_j}.
\end{align}
for notational simplicity. Note that the density function $q_\sir$ can be re-expressed as
\begin{align}
q_\sir(z) 
&= \Expect_{q(z_{0:k})}\Expect_{r(i \giv  z_{0:k})}\delta_z(z_i)\\
&= \Expect_{q(z_{0:k})}\sum_i \pi_i \bv_i \delta_z(z_i)\\
&= \Expect_{\pi(i)}\Expect_{q_{z_{-i}}}\Expect_{q_i(z_i \giv z_{-i})} \bv_i \delta_z(z_i)\\
&= \Expect_{\pi(i)}\Expect_{q_{z_{-i}}} \bv_i(z) q_i(z \giv z_{-i}).
\end{align}
We now begin with the ELBO under $q_\sir(z)$ and proceed from there via
\begin{align}
\Expect_{q_\sir}(z) \ln \frac{\tp(z)}{q_\sir(z)}
&= -\uKL{q_\sir(z)}{\tp(z)}\\
&= -\uKL{\Expect_{\pi(i)}\Expect_{q_{z_{-i}}} \bv_i(z) q_i(z \giv z_{-i})}{\tp(z)} \\
&\ge -\Expect_{\pi(i)}\Expect_{q_{z_{-i}}} \uKL{\bv_i(z) q_i(z \giv z_{-i})}{\tp(z)},
\end{align}
where we use Jensen's Inequality to exploit the convexity of the unnormalized Kullback-Leibler divergence $\uKL{\cdot}{\cdot}$. We now do a small change of notation when rewriting the unnormalized KL as an integral to keep the notation simple
\begin{align}
\Expect_{q_\sir}(z) \ln \frac{\tp(z)}{q_\sir(z)}
&\ge \Expect_{\pi(i)}\Expect_{q_{z_{-i}}} \int_{z_i} \bv_i q(z_i \giv z_{-i}) \ln \frac{\tp(z_i)}{\bv_i q(z_i \giv z_{-i})} \\
&= \Expect_{\pi(i)}\Expect_{q_{z_{-i}}}\Expect_{q(z_i \giv z_{-i})} \bv_i \ln \frac{\tp(z_i)}{\bv_i q(z_i \giv z_{-i})} \\
&= \Expect_{q(z_{0:k})} \sum_i \bw_i \ln \frac{\tp(z_i)}{\bv_i q(z_i \giv z_{-i})} \\
&= \Expect_{q(z_{0:k})} \sum_i \bw_i \ln \paren{\sum_j \pi_j w_j 
\cdot \frac{q(z_i \giv z_{<i})}{q(z_i \giv z_{-i})}} \\
&= \Expect_{q(z_{0:k})} \sum_i \bw_i \brac{\ln \paren{\sum_j \pi_j w_j} 
+ \ln \paren{\frac{q(z_i \giv z_{<i})}{q(z_i \giv z_{-i})}}} \\
\end{align}
If $z_{0:k}$ are independent, then it follows that $q(z_i \giv z_{<i}) = q(z_i \giv z_{-i}) = q(z_i)$. Thus,
\begin{align}
    \Expect_{q_\sir}(z) \ln \frac{\tp(z)}{q_\sir(z)} 
    &\ge \Expect_{q(z_{0:k})} \sum_i \bw_i \brac{\ln \paren{\sum_j \pi_j w_j} 
    + \ln \paren{\frac{q(z_i \giv z_{<i})}{q(z_i \giv z_{-i})}}} \\
    &= \Expect_{q(z_{0:k})} \sum_i \bw_i \ln \paren{\sum_j \pi_j w_j} \\
    &= \Expect_{q(z_{0:k})} \ln \paren{\sum_j \pi_j w_j}.
\end{align}
\end{proof}

\newpage
\section{Model Performance on Test and Training Data}
\label{app:all_performances}
Here we report various performance metrics for each type of model trained on the training set for both Omniglot and SVHN. As stated earlier, log-likelihood is estimated using BSVI-500, and ELBO* refers to the lower bound achieved by SVI-500 (i.e. $z \sim q_{500}$).
KL* and Reconstruction* are the rate and distortion terms for ELBO*, respectively.
\begin{align}
    \text{Log-likelihood}
    &= \Expect_{q(z_{0:500} \giv x)} \brac{\ln \sum_{i=0}^{500} \pi_i \frac{p_\theta(x, z_i)}{q(z_i \giv z_{<i}, x)}} \\
    \text{ELBO*} &= \underbrace{\Expect_{q_{500}} \brac{\ln p_\theta(x \giv z)}}_{\text{Reconstruction*}}
    + \underbrace{\uKL{q_{500}(z)}{p_\theta(z)}}_{\text{KL*}}
\end{align}

\begin{table*}[!h]
\small
\centering
\caption{Test set performance on the Omniglot dataset. Note that $k = 9$ and $k' = 10$ (see \cref{sec:setup}). We approximate the log-likelihood with BSVI-$500$ bound (\cref{app:likelihood}). We additionally report the SVI-$500$ bound (denoted ELBO*) along with its KL and reconstruction decomposition.}
\label{table:omni_full_test}
\begin{center}
\begin{tabular}{l|c|ccc}
\bf{Model} & \bf{Log-likelihood} & \bf{ELBO*} & \bf{KL*} & \bf{Reconstruction*} \\
\hline
VAE                & -89.83 $\pm$ 0.03      & -89.88 $\pm$ 0.02      & 0.97 $\pm$ 0.13 & 88.91 $\pm$ 0.15 \\
IWAE-$k'$          & -89.02 $\pm$ 0.05      & -89.89 $\pm$ 0.06      & 4.02 $\pm$ 0.18 & 85.87 $\pm$ 0.15 \\
SVI-$k'$           & -89.65 $\pm$ 0.06      & \bf{-89.73 $\pm$ 0.05} & 1.37 $\pm$ 0.15 & 88.36 $\pm$ 0.20 \\
BSVI-$k$-DS        & -88.93 $\pm$ 0.02      & -90.13 $\pm$ 0.04       & 8.13 $\pm$ 0.17 & 81.99 $\pm$ 0.14 \\
BSVI-$k$           & -88.98 $\pm$ 0.03      & -90.19 $\pm$ 0.06       & 8.29 $\pm$ 0.25 & 81.89 $\pm$ 0.20 \\
BSVI-$k$-$\pi$     & -88.95 $\pm$ 0.02      & -90.18 $\pm$ 0.05       & 8.48 $\pm$ 0.22 & \bf{81.70 $\pm$ 0.18} \\
BSVI-$k$-SIR       & \bf{-88.80 $\pm$ 0.03} & -90.24 $\pm$ 0.06      & 7.52 $\pm$ 0.21 & 82.72 $\pm$ 0.22 \\
BSVI-$k$-SIR-$\pi$ & -88.84 $\pm$ 0.05      & -90.22 $\pm$ 0.02       & 7.44 $\pm$ 0.04 & 82.78 $\pm$ 0.05 \\

\end{tabular}
\end{center}
\end{table*}

\begin{table*}[!h]
\small
\centering
\caption{Test set performance on the grayscale SVHN dataset.} 
\label{table:svhn_full_test}
\begin{center}
\begin{tabular}{l|c|ccc}
\textbf{Model} & \textbf{Log-likelihood} & \textbf{ELBO*} & \textbf{KL*} & \textbf{Reconstruction*} \\
\hline
VAE                & -2202.90 $\pm$ 14.95     & -2203.01 $\pm$ 14.96     & 0.40 $\pm$ 0.07  & 2202.62 $\pm$ 14.96 \\
IWAE-$k'$          & -2148.67 $\pm$ 10.11     & -2153.69 $\pm$ 10.94     & 2.03 $\pm$ 0.08  & 2151.66 $\pm$ 10.86 \\
SVI-$k'$           & -2074.43 $\pm$ 10.46     & -2079.26 $\pm$ 9.99      & 45.28 $\pm$ 5.01 & 2033.98 $\pm$ 13.38 \\
BSVI-$k$-DS        & \bf{-2054.48 $\pm$ 7.78} & \bf{-2060.21 $\pm$ 7.89} & 48.82 $\pm$ 4.66 & 2011.39 $\pm$ 9.35 \\
BSVI-$k$           & -2054.75 $\pm$ 8.22      & -2061.11 $\pm$ 8.33      & 51.12 $\pm$ 3.80 & \bf{2009.99 $\pm$ 8.52} \\
BSVI-$k$-$\pi$     & -2060.01 $\pm$ 5.00      & -2065.45 $\pm$ 5.88      & 47.24 $\pm$ 4.62 & 2018.21 $\pm$ 1.64 \\
BSVI-$k$-SIR       & -2059.62 $\pm$ 3.54      & -2066.12 $\pm$ 3.63      & 51.24 $\pm$ 5.03 & 2014.88 $\pm$ 5.30 \\
BSVI-$k$-SIR-$\pi$ & -2057.53 $\pm$ 4.91      & -2063.45 $\pm$ 4.34      & 49.14 $\pm$ 5.62 & 2014.31 $\pm$ 8.25 \\
\end{tabular}
\end{center}
\end{table*}

\begin{table*}[!h]
\small
\centering
\caption{Test set performance on the FashionMNIST dataset.}
\label{table:fmnist_full_test}
\begin{center}
\begin{tabular}{l|c|ccc}
\bf{Model} & \bf{Log-likelihood} & \bf{ELBO*} & \bf{KL*} & \bf{Reconstruction*} \\
\hline
VAE & -1733.86 $\pm$ 0.84      & -1736.49 $\pm$ 0.73      & 11.62 $\pm$ 1.01 & 1724.87 $\pm$ 1.70 \\
IWAE-$k'$ & -1705.28 $\pm$ 0.66      & -1710.11 $\pm$ 0.72      & 33.04 $\pm$ 0.36 & 1677.08 $\pm$ 0.70 \\
SVI-$k'$ & -1710.15 $\pm$ 2.51      & -1718.39 $\pm$ 2.13      & 26.05 $\pm$ 1.90 & 1692.34 $\pm$ 4.03 \\
BSVI-$k$-DS & -1699.14 $\pm$ 0.18      & -1706.92 $\pm$ 0.11      & 41.73 $\pm$ 0.18 & 1665.19 $\pm$ 0.26 \\
BSVI-$k$ & \bf{-1699.01 $\pm$ 0.33} & \bf{-1706.62 $\pm$ 0.35} & 41.48 $\pm$ 0.16 & \bf{1665.14 $\pm$ 0.39} \\
BSVI-$k$-$\pi$ & -1699.24 $\pm$ 0.36      & -1706.92 $\pm$ 0.37      & 41.60 $\pm$ 0.49 & 1665.32 $\pm$ 0.31 \\
BSVI-$k$-SIR & -1699.44 $\pm$ 0.45      & -1707.00 $\pm$ 0.49      & 41.48 $\pm$ 0.12 & 1665.52 $\pm$ 0.41 \\
BSVI-$k$-SIR-$\pi$ & -1699.09 $\pm$ 0.28      & -1706.68 $\pm$ 0.26      & 41.18 $\pm$ 0.19 & 1665.50 $\pm$ 0.31 \\
\end{tabular}
\end{center}
\end{table*}

\begin{table*}[!h]
\small
\centering
\caption{Training set performance on the Omniglot dataset. Note that $k = 9$ and $k' = 10$ (see \cref{sec:setup}). We approximate the log-likelihood with BSVI-$500$ bound (\cref{app:likelihood}). We additionally report the SVI-$500$ bound (denoted ELBO*) along with its KL and reconstruction decomposition.}
\label{table:omni_train}
\begin{center}
\begin{tabular}{l|c|ccc}
\textbf{Model} & \textbf{Log-likelihood} & \textbf{ELBO*} & \textbf{KL*} & \textbf{Reconstruction*} \\
\hline
VAE                & -88.60 $\pm$ 0.18 & -88.66 $\pm$ 0.18 & 1.00 $\pm$ 0.13 & 87.66 $\pm$ 0.19 \\
IWAE-$k'$          & -87.09 $\pm$ 0.12 & \textbf{-87.88 $\pm$ 0.12} & 4.18 $\pm$ 0.19 & 83.70 $\pm$ 0.29 \\
SVI-$k'$           & -88.09 $\pm$ 0.16 & -88.18 $\pm$ 0.15 & 1.38 $\pm$ 0.14 & 86.80 $\pm$ 0.27 \\
BSVI-$k$-SIR       & -87.24 $\pm$ 0.22 & -88.57 $\pm$ 0.25 & 7.67 $\pm$ 0.22 & 80.89 $\pm$ 0.44 \\
BSVI-$k$-DS        & \textbf{-87.00 $\pm$ 0.11}& -88.13 $\pm$ 0.10 & 8.30 $\pm$ 0.18 & 79.83 $\pm$ 0.23 \\
BSVI-$k$           & -87.11 $\pm$ 0.11 & -88.23 $\pm$ 0.10 & 8.45 $\pm$ 0.22 & 79.77 $\pm$ 0.28 \\
BSVI-$k$-$\pi$     & -87.10 $\pm$ 0.11 & -88.24 $\pm$ 0.10 & {8.67 $\pm$ 0.27} & \textbf{79.57 $\pm$ 0.31} \\
BSVI-$k$-SIR-$\pi$ & -87.17 $\pm$ 0.10 & -88.45 $\pm$ 0.11 & 7.63 $\pm$ 0.04 & 80.83 $\pm$ 0.13 \\
\end{tabular}
\end{center}
\end{table*}

\begin{table*}[!h]
\small
\centering
\caption{Training set performance on the grayscale SVHN dataset.} 
\label{table:svhn_train}
\begin{center}
\begin{tabular}{l|c|ccc}
\textbf{Model} & \textbf{Log-likelihood} & \textbf{ELBO*} & \textbf{KL*} & \textbf{Reconstruction*} \\
\hline
VAE                & -2384 $\pm$ 13.58 & -2384 $\pm$ 13.59 & 0.5 $\pm$ 0.09 & 2384 $\pm$ 13.58 \\
IWAE-$k'$          & -2345 $\pm$ 8.77  & -2350 $\pm$ 9.58 & 2.19 $\pm$ 0.03 & 2348 $\pm$ 9.55 \\
SVI-$k'$           & -2274 $\pm$ 8.87  & -2280 $\pm$ 8.34 & 56 $\pm$ 5.75 & 2224 $\pm$ 12.20 \\
BSVI-$k$-SIR       & -2260 $\pm$ 2.73    & -2268 $\pm$ 3.01 & 62.17 $\pm$ 5.51 & 2206 $\pm$ 4.86 \\
BSVI-$k$-DS        & \textbf{-2255.28 $\pm$ 7.38} & \textbf{-2262 $\pm$ 7.51} & 59 $\pm$ 5.34 & 2203 $\pm$ 9.28 \\
BSVI-$k$           & -2255.47 $\pm$ 7.31 & -2263 $\pm$ 7.47 & {62.20 $\pm$ 4.27} & \textbf{2201 $\pm$ 8.26} \\
BSVI-$k$-$\pi$     & -2261 $\pm$ 5.09    & -2268 $\pm$ 6.13 & 58 $\pm$ 5.10 & 2210 $\pm$ 1.43 \\
BSVI-$k$-SIR-$\pi$ & -2258 $\pm$ 3.89    & -2265 $\pm$ 3.30 & 60 $\pm$ 6.36 & 2206 $\pm$ 7.79 \\
\end{tabular}
\end{center}
\end{table*}

\begin{table*}[!h]
\small
\centering
\caption{Training set performance on the FashionMNIST dataset.}
\label{table:fmnist_train}
\begin{center}
\begin{tabular}{l|c|ccc}
\bf{Model} & \bf{Log-likelihood} & \bf{ELBO*} & \bf{KL*} & \bf{Reconstruction*} \\
\hline
VAE                & -1686.11 $\pm$ 2.40 & -1688.84 $\pm$ 2.27 & 11.34 $\pm$ 1.01 & 1677.50 $\pm$ 3.16 \\
IWAE-$k'$          & -1659.12 $\pm$ 0.59 & -1663.80 $\pm$ 0.53 & 33.62 $\pm$ 0.31 & 1630.18 $\pm$ 0.56 \\
SVI-$k'$           & -1666.89 $\pm$ 2.47 & -1675.15 $\pm$ 2.11 & 25.34 $\pm$ 1.80 & 1649.81 $\pm$ 3.90 \\
BSVI-$k$-SIR       & -1653.34 $\pm$ 1.36 & -1660.79 $\pm$ 1.35 & 41.91 $\pm$ 0.15 & 1618.88 $\pm$ 1.51 \\
BSVI-$k$-DS        & -1653.47 $\pm$ 0.85 & -1661.15 $\pm$ 0.82 & {42.13 $\pm$ 0.23} & 1619.02 $\pm$ 1.05 \\
BSVI-$k$           & \textbf{-1652.87 $\pm$ 0.87} & \textbf{-1660.27 $\pm$ 0.89} & 41.85 $\pm$ 0.12 & \textbf{1618.43 $\pm$ 0.86} \\
BSVI-$k$-$\pi$     & -1654.35 $\pm$ 0.74 & -1661.99 $\pm$ 0.68 & 42.00 $\pm$ 0.48 & 1619.99 $\pm$ 1.09 \\
BSVI-$k$-SIR-$\pi$ & -1654.75 $\pm$ 1.19 & -1662.18 $\pm$ 1.25 & 41.58 $\pm$ 0.22 & 1620.60 $\pm$ 1.44 \\
\end{tabular}
\end{center}
\end{table*}

\newpage
\section{Log-likelihood Estimation Using BSVI and IWAE}\label{app:likelihood}


A popular way to approximate the true log-likelihood is to use the IWAE-$k$ bound with a sufficiently large $k$ during evaluation time \cite{burda2015importance,sonderby2016ladder,kingma2016improved}. Here we compare log-likelihood estimates of BSVI and IWAE in Tables \ref{table:omni_iwae_test} and \ref{table:svhn_iwae_test} and empirically show that BSVI bounds are as tight as IWAE bounds in all of our experiments. This justifies the use of BSVI-500 for estimating log-likelihood in our reports.

\begin{table*}[!h]
\centering
\caption{Log-likelihood estimates using BSVI-$k$ and IWAE-$k$ on the Omniglot test set. The tightest estimate is bolded for each model unless there is a tie. Note that $k$ is fixed to $500$, and for IWAE we use five different numbers of particles: $k, 2k, 3k, 4k, 5k$.}
\label{table:omni_iwae_test}
\begin{center}
\begin{tabular}{lcccccccc}
\textbf{Model} & \textbf{BSVI-$k$} & \textbf{IWAE-$k$} & \textbf{IWAE-$2k$} & \textbf{IWAE-$3k$} & \textbf{IWAE-$4k$} & \textbf{IWAE-$5k$} \\ \hline
VAE                & -89.83          & -89.83 & -89.83 & -89.83 & -89.83 & -89.83 \\
SVI-$k'$           & -89.65          & -89.65 & -89.65 & -89.65 & -89.65 & -89.65 \\
IWAE-$k'$          & \textbf{-89.02} & -89.05 & -89.04 & -89.03 & -89.03 & -89.03 \\
BSVI-$k$-DS        & \textbf{-88.93} & -89.05 & -89.00 & -88.99 & -88.98 & -88.97 \\
BSVI-$k$           & \textbf{-88.98} & -89.10 & -89.06 & -89.04 & -89.03 & -89.02 \\
BSVI-$k$-SIR       & \textbf{-88.80} & -88.92 & -88.88 & -88.86 & -88.85 & -88.84 \\
BSVI-$k$-$\pi$     & \textbf{-88.95} & -89.07 & -89.03 & -89.01 & -89.00 & -88.99 \\
BSVI-$k$-SIR-$\pi$ & \textbf{-88.84} & -88.95 & -88.91 & -88.89 & -88.88 & -88.87 \\
\end{tabular}
\end{center}
\end{table*}

\begin{table*}[!h]
\centering
\caption{Log-likelihood estimates using BSVI-$k$ vs. IWAE-$k$ on the SVHN test set. The tightest estimate is bolded for each model unless there is a tie. Note that $k$ is fixed to $500$, and for IWAE we use five different numbers of particles: $k, 2k, 3k, 4k, 5k$.} 
\label{table:svhn_iwae_test}
\begin{center}
\begin{tabular}{lcccccccc}
\textbf{Model} & \textbf{BSVI-$k$}  & \textbf{IWAE-$k$} & \textbf{IWAE-$2k$} & \textbf{IWAE-$3k$} & \textbf{IWAE-$4k$} & \textbf{IWAE-$5k$} \\ \hline
VAE                & -2203          & -2203 & -2203 & -2203 & -2203 & -2203 \\
SVI-$k'$           & \textbf{-2074} & -2096 & -2095 & -2094 & -2094 & -2093 \\
IWAE-$k'$          & -2149          & -2149 & -2149 & -2149 & -2149 & -2149 \\
BSVI-$k$-DS        & \textbf{-2054} & -2079 & -2078 & -2077 & -2077 & -2077 \\
BSVI-$k$           & \textbf{-2055} & -2081 & -2080 & -2080 & -2079 & -2079 \\
BSVI-$k$-SIR       & \textbf{-2060} & -2087 & -2086 & -2085 & -2085 & -2084 \\
BSVI-$k$-$\pi$     & \textbf{-2060} & -2085 & -2083 & -2083 & -2082 & -2082 \\
BSVI-$k$-SIR-$\pi$ & \textbf{-2058} & -2083 & -2082 & -2081 & -2081 & -2080 \\
\end{tabular}
\end{center}
\end{table*}

\begin{table*}[!h]
\centering
\caption{Log-likelihood estimates using BSVI-$k$ vs. IWAE-$k$ on the FashionMNIST test set. The tightest estimate is bolded for each model unless there is a tie. Note that $k$ is fixed to $500$, and for IWAE we use five different numbers of particles: $k, 2k, 3k, 4k, 5k$.} 
\label{table:fmnist_iwae_test}
\begin{center}
\begin{tabular}{lcccccccc}
\textbf{Model} & \textbf{BSVI-$k$}  & \textbf{IWAE-$k$} & \textbf{IWAE-$2k$} & \textbf{IWAE-$3k$} & \textbf{IWAE-$4k$} & \textbf{IWAE-$5k$} \\ \hline
VAE                & \textbf{-1733.86} & -1737.76 & -1737.49 & -1737.35 & -1737.25 & -1737.18 \\
SVI-$k'$           & \textbf{-1705.28} & -1727.30 & -1726.26 & -1725.72 & -1725.35 & -1725.07 \\
IWAE-$k'$          & \textbf{-1710.15} & -1721.01 & -1720.23 & -1719.80 & -1719.51 & -1719.29 \\
BSVI-$k$-DS        & \textbf{-1699.14} & -1727.55 & -1726.37 & -1725.71 & -1725.25 & -1724.93 \\
BSVI-$k$           & \textbf{-1699.01} & -1727.38 & -1726.19 & -1725.53 & -1725.09 & -1724.75 \\
BSVI-$k$-SIR       & \textbf{-1699.24} & -1727.48 & -1726.28 & -1725.63 & -1725.19 & -1724.86 \\
BSVI-$k$-$\pi$     & \textbf{-1699.44} & -1728.05 & -1726.88 & -1726.23 & -1725.77 & -1725.44 \\
BSVI-$k$-SIR-$\pi$ & \textbf{-1699.09} & -1727.03 & -1725.86 & -1725.20 & -1724.77 & -1724.44 \\
\end{tabular}
\end{center}
\end{table*}

\newpage
\section{Experiment Setup}\label{app:setup}

Here we describe our detailed experiment setup. For both Omniglot and SVHN experiments, we used a ResNet with three hidden layers of size 64 as the encoder and a 12-layer gated PixelCNN with the constant layer size of 32 as the decoder. Network parameters ($\phi,\theta$) were trained with the AMSGrad optimizer \cite{reddi2018adam}. For SVI, we followed the experimental setup of \cite{kim2018semi} and optimized local variational parameters $\lambda_{0:k}$ with SGD with momentum with learning rate 1.0 and momentum 0.5. To stabilize training, we applied gradient clipping to both network parameters and local variational parameters. Each model was trained for 200k steps with early-stopping based on validation loss. The best-performing models on the validation set were then evaluated on the test set. All experiments were performed four times, and we reported the mean and standard deviation of relevant metrics.

\textbf{Omniglot}. We used 2000 randomly-selected training images as the validation set. Each digit was dynamically binarized at training time based on the pixel intensity. We used 32-dimensional latent variable with unit Gaussian prior. Each pixel value was modeled as a Bernoulli random variable where the output of the decoder was interpreted as log probabilities. We also followed the training procedure in \cite{kim2018semi} and annealed the KL term multiplier \cite{sonderby2016ladder,bowman2015generating} from $0.1$ to $1.0$ during the first $5000$ iterations of training.

\textbf{SVHN}. We merged ``train'' and ``extra'' data in the original SVHN dataset to create our training set. We again reserved 2000 randomly-selected images as the validation set. To reuse the network architecture for the Omniglot dataset with minimal modifications, we gray-scaled all images and rescaled the pixel intensities to be in $[0,1]$.  The only differences from Omniglot experiments are: increased latent variable dimensions (64), larger image size ($32 \times 32$), and the use of discretized logistic distribution by \cite{salimans2017pixelcnnpp} with a global scale parameter for each pixel. Similar to \cite{tomczak2017vae}, we lower-bound the scale parameter by a small positive value.

\textbf{FashionMNIST}. Similar to above, we used 2000 randomly-selected training images as the validation set. The network architecture and hyperparameters were identical to those of SVHN dataset, except we used 32-dimensional latent variables and did not employ KL term annealing.





Below is the list hyperparameters used in our experiments. Since we have two stochastic optimization processes (one for the model and one for SVI), we employed separate gradient clipping norms.

\begin{table}[!h]
\centering
\caption{Hyperparameters used for our experiments.} 
\label{table:hyperparams}
\begin{tabular}{l|c|c|c}
\textbf{Hyperparameter} & \textbf{Omniglot} & \textbf{SVHN} & \textbf{FashionMNIST} \\ \hline
Learning rate & $0.001$ & $0.001$ & $0.001$ \\
SVI learning rate & $1.0$ & $1.0$ & $1.0$ \\
SVI momentum & $0.5$ & $0.5$ & $0.5$ \\
Batch size & $50$ & $50$ & $50$ \\
KL-cost annealing steps & $5000$ & $0$ & $0$ \\
Max gradient norm ($\phi,\theta$) & $5.0$ & $5.0$ & $5.0$ \\
Max gradient norm (SVI) & $1.0$ & $1.0$ & $1.0$ \\ \hline
Latent variable dimension & $32$ & $64$ & $32$ \\
Observation model & Bernoulli & Discretized Logistic & Discretized Logistic \\
Scale parameter lower bound & N/A & $0.001$ & $0.001$ \\
\end{tabular}
\end{table}

\end{document}